\documentclass[namedreferences,hyperref,optionalrh]{springer}
\usepackage{graphicx}  
\usepackage{color}      
\usepackage{float}

\newcommand{\std}[1]{^{\scriptstyle{\pm#1}}}

\usepackage{algorithm, algpseudocode}
\usepackage{mathtools}
\usepackage{url}
\usepackage{soul}
\usepackage{bbm}
\usepackage{amsmath}
\usepackage{amssymb}
\usepackage{amsthm}
\usepackage{adjustbox}
\usepackage{booktabs}
\newtheorem{theorem}{Theorem}

\newtheorem{proposition}[theorem]{Proposition}

\newcounter{algsubstate}
\renewcommand{\thealgsubstate}{\alph{algsubstate}}

\def \score {\textrm{conformity score}}

\def \Atrain {A^{\textrm{train}}}
\def \Aval {A^{\textrm{val}}}

\def \Atest {A^{\textrm{test}}}

\def \Wtrain {W^{\textrm{train}}}
\def \Wval {W^{\textrm{val}}}

\def \Wtest {W^{\textrm{test}}}
\def \Wcalib {W^{\textrm{calib}}}

\def \Etrain {E^{\textrm{train}}}
\def \Eval {E^{\textrm{val}}}
\def \Ect {E^{\textrm{ct}}}
\def \Etest {E^{\textrm{test}}}
\def \Ecalib {E^{\textrm{calib}}}

\def \loss {\mathcal{L}}

\usepackage{comment}
\newif\ifshow
\showtrue
\ifshow
  
\else
  \excludecomment{allcomments}
\fi

\begin{document}

\begin{frontmatter}
\title{Conformal Load Prediction with Transductive Graph Autoencoders}

\author[addressref={aff1},corref,email={ruiluo@cityu.edu.hk}]{\inits{R.L.}\fnm{Rui}~\snm{Luo}}
\author[addressref=aff2,email={nicolo.colombo@rhul.ac.uk}]{\inits{N.C.}\fnm{Nicolo}~\snm{Colombo}}
\address[id=aff1]{City University of Hong Kong, Kowloon Tong, Hong Kong SAR}
\address[id=aff2]{Royal Holloway, University of London, Egham, Surrey, UK}

\runningauthor{Luo and Colombo}
\runningtitle{Conformal Load Prediction with Transductive Graph Autoencoders}

\begin{abstract}
Predicting edge weights on graphs has various applications, from transportation systems to social networks. 
This paper describes a Graph Neural Network (GNN) approach for edge weight prediction with guaranteed coverage. 
We leverage conformal prediction to calibrate the GNN outputs and produce valid prediction intervals.
We handle data heteroscedasticity through error reweighting and Conformalized Quantile Regression (CQR). 
We compare the performance of our method against baseline techniques on real-world transportation datasets. 
Our approach has better coverage and efficiency 
 than all baselines and showcases robustness and adaptability.
\end{abstract}
\keywords{Link Prediction, Transductive Learning, Graph Autoencoder, Conformal Quantile Regression, Conformal Prediction}
\end{frontmatter}

\section{Introduction}

Graph machine learning has seen a surge in interest with the advent of complex networked systems in diverse domains.
Applications include social and transportation networks and various kinds of biological systems.
In most cases, the interaction between nodes is typically represented by edges with associated weights. 
The edge weights can embody varying characteristics, from the strength of interaction between two individuals in a social network to the traffic capacity of a route in a transportation system. 
The prediction of the edge weights is vital to understanding and modelling graph data.

Graph Neural Networks (GNNs) have been successfully used on node classification and link prediction tasks.
In this work, we consider their application to edge weight prediction. 
Edge weight prediction has found use in diverse domains such as message volume prediction in online social networks \citep{hou2017deep}, forecasting airport transportation networks \citep{mueller2023link}, and assessing trust in Bitcoin networks \citep{kumar2016edge}. These examples highlight the wide-ranging applicability and importance of edge weight prediction and load forecasting techniques in different network-based systems.

Applying GNNs to edge weight prediction is often unreliable.
Producing prediction intervals with finite-sample guarantees can be useful in many scenarios, e.g. when the GNNs forecast influences a decision-making process.  
In a read transportation network, the prediction intervals may be interpreted as the upper and lower bounds of the predicted traffic flow.
How to integrate this information to support downstream optimization algorithms goes beyond the scope of this work.

We present a novel approach for edge weight prediction with guaranteed coverage.
Focusing on the transductive setting, we define a series of GNN approaches to predict the edge weights of a given graph.
We show how to calibrate the GNN predictions with different conformal inference methods.
The final output of our algorithms is a set of marginally valid prediction intervals for the unknown weights of the graph edges. 
We handle heteroscedastic node features with a new error-reweighted extension of Conformalized Quantile Regression. 

We validate our algorithms empirically using two real-world transportation datasets. 
The proposed approach outperforms all the baseline methods on coverage and efficiency.

The rest of the paper is organized as follows. Section \ref{sec: problem} provides background on GNNs and edge weight prediction. 
Section \ref{sec: conformal} outlines our conformal load forecast methods. 
Section \ref{sec: empirical} presents the experimental results.
Section \ref{sec: conclusion} contains a summary of our contribution and a discussion of potential future directions.

\section{Transductive Edge Weight Prediction Using GNNs}\label{sec: problem}
Let $G=(V, E)$ be a graph with node set $V$ and edge set $E \subseteq V \times V$.
Assume the graph has $n$ nodes with $f$ node features.
Let $X \in \mathbb{R}^{n\times f}$ be the node feature matrix, and $X_i \in \mathbb{R}^{f}$ the feature vector of the $i$th node. 
The binary adjacency matrix of $G$, 
\begin{equation}
A \in \{0, 1\}^{n\times n}, \quad 
A_{ij} =
\begin{cases}
1, & \textrm{if } (i, j) \in E; \\
0, & \textrm{otherwise}.
\end{cases}
\end{equation}
encodes the binary (unweighted) structure of the graph.

We define the weight matrix as $W \in \mathbb{R}_{\geq 0}^{n\times n}$, where $W_{ij}$ denotes the weight of the edge connecting node $i$ to node $j$. In a road system, we interpret $W_{ij}$ as the volume of traffic transitioning from junction $i$ to junction $j$.

We split the edge set into three subsets $E = \Etrain \cup \Eval \cup \Etest$.
We assume we know the weights of the edges in $\Etrain$ and $\Eval$.
The goal is to estimate the unknown weights of the edges in $\Etest$.
We also assume we know the entire graph structure, $A$. 

To mask the validation and test sets, we define \begin{equation}
\Atrain \in \{0, 1\}^{n\times n}, \quad 
\Atrain_{ij} =
\begin{cases}
1, & \textrm{if } (i, j) \in \Etrain; \\
0, & \textrm{otherwise}.
\end{cases}
\end{equation}
Similarly, we let $\Aval$ and $\Atest$ be defined as $\Atrain$ with $\Etrain$ replaced by $\Eval$ and $\Etest$.

Even if $(i, j) \notin \Etrain$, it is possible to assign a positive number $\delta>0$, such as the minimum or average of the existing edge weights, to $\Wtrain_{ij}$ to represent prior knowledge or assumptions about the unknown edge weight. 
This processing is tailored to transportation applications, characterized by a stable graph structure where altering roads is challenging. The focus lies on predicting edge weights. For the edges in the calibration, test, and even validation sets during the training phase, we assign a positive edge weight rather than zero. This approach ensures the model recognizes these connections or the graph structure. An ablation study in Section \ref{sec: empirical} compares two methods of weight assignment: one using the average weight of training edges, i.e., $\delta = \frac{\sum_{(i,j)\in \Etrain} W_{ij}}{|\Etrain|}$; and the other bootstrapping from these weights $\{W_{ij}\}_{(i,j)\in \Etrain}$. We demonstrate that both methods surpass the baseline which does not account for graph structure and sets edge weights to zero, i.e., $\delta=0$.

The resulting weighted adjacency matrix is
\begin{equation}\label{eq: weighted adj train}
\Wtrain =
\begin{cases}
W_{ij}, & \textrm{if } (i, j) \in \Etrain; \\
\delta, & \textrm{if } (i, j) \in \Eval \cup \Etest; \\
0, & \textrm{otherwise},
\end{cases}
\end{equation}

In the transductive setup, the structure of the entire graph, $A$, is known during training, validation, and testing.
To calibrate the prediction, we extract a subset from $\Etest$ as a calibration edge set. This guarantees calibration and test samples are exchangeable, provided 
\begin{itemize}
    \item we do not use $W_{ij}$, $(i,j) \in \Etest$ to make a prediction and 
    \item we split the edge set uniformly at random, and $\Atrain$, $\Aval$, $\Atest$ are exchangeable. %
\end{itemize}

\begin{figure}
\centerline{\includegraphics[width=0.8\textwidth,clip=]{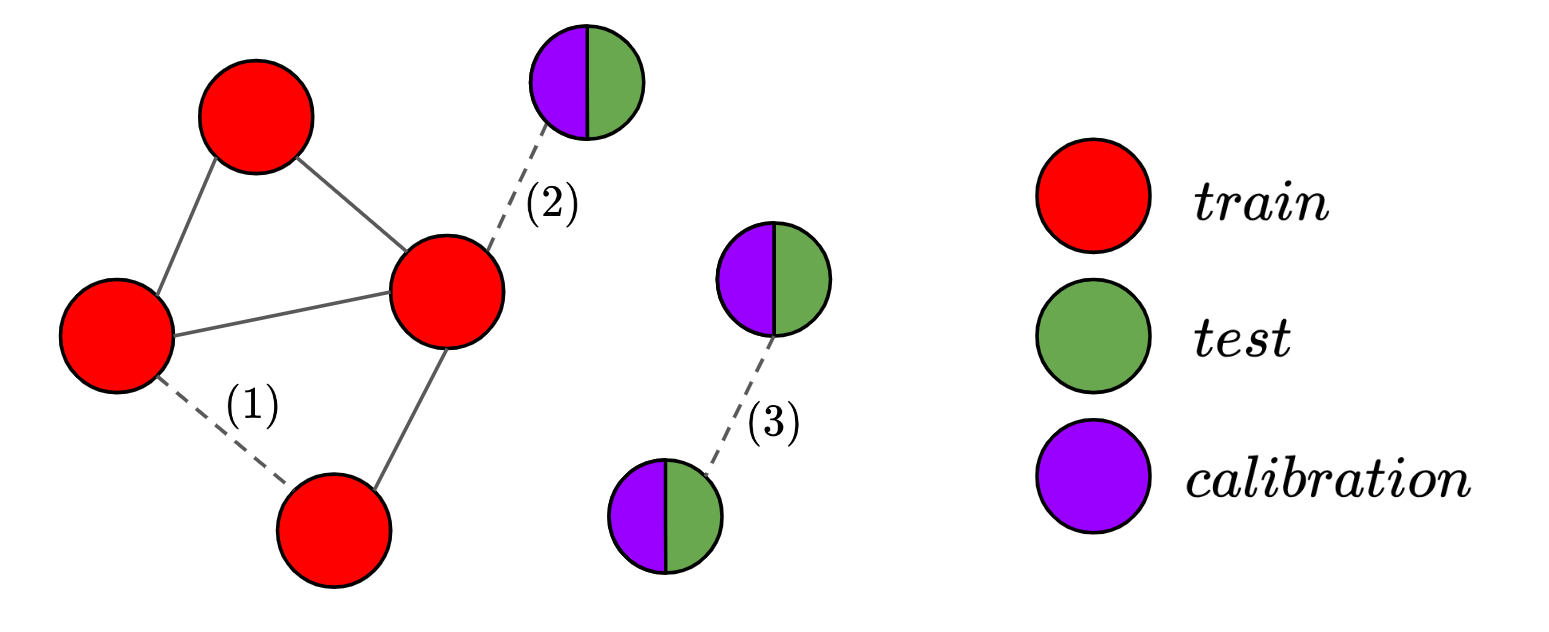}}
\small
\caption{Training settings for edge weight prediction in a conventional data split. 
Different colors indicate the availability of the nodes during training, calibration or testing. 
Solid and dashed lines represent edges used for training and edges within the test and calibration set. 
Predicting (1) corresponds to the transductive setting considered here.
(2) and (3) are examples of the inductive setting.
In road traffic forecasting, (1) may be the undetected traffic flow between two existing road junctions, e.g. for a (new) road where a traffic detector has not yet been installed.
(2) and (3) represent scenarios where new road junctions are constructed, connecting to existing ones or forming connections with each other to create new roads.}
\label{fig: transductive}
\end{figure}

The motivation for this approach can be observed in real-world traffic applications. We have an established area of the city monitored by traffic detectors, representing a fixed set of training edges. Simultaneously, in a new region, we randomly install traffic detectors on various roads. This process corresponds to the random division of the remaining edges into a calibration set and a test set.
In particular, we know the entire road systems, $A$, and the traffic volume of certain roads, $\Wtrain + \Wval$. 
The task is to predict the traffic volume of the remaining roads, $\Wtest$. 
During training, the model observes the nodes and leverages their features to make predictions. 
At inference time, the model deduces the edges that connect these nodes.
See Figure \ref{fig: transductive} for a graphical representation of our setup.

To predict the edge weights, we consider two GNN approaches. 
The first model is a link-prediction Graph Auto Encoder (GAE). 
Compared to the original GAE described in \citet{kipf2016variational}, we let the algorithm access the entire graph structure during training. 
The enhancement improves the model performance on edge weight prediction by allowing a better characterization of the edge environments. 
In the traffic forecasting setup, this means the road network remains unchanged at training and test time. 
Our second approach transforms the edge weight prediction problem into a node regression problem.
We convert the original graph into its line graph. 
The conversion preserves the graph structure, except for cases where the original graph is a triangle or a star network of four nodes \citep{whitney1992congruent}. 

The latter approach has a structural disadvantage. 
In the GAE method, $\Wtrain$ is used explicitly to update the node embeddings (see \eqref{eq: DiGAE} below).
In the line-graph approach, this is impossible because the training weights are used as labels.

\subsection{Graph Autoencoder} \label{subsec: GAE}
The GAE \citep{kipf2016variational} learns an embedding for the nodes of undirected unweighted graphs.
Using GAEs in link prediction tasks is a popular approach. 
The practice has been extended to isolated nodes \citep{ahn2021variational}, directed graphs \citep{kollias2022directed}, weighted graphs \citep{zulaika2022lwp}, and graphs with different edge types \citep{samanta2020nevae}. 

As in \citet{kipf2016variational}, we let $Z\in \mathbb{R}^{n\times d}$ be the node embedding matrix obtained from a base GNN model\footnote{In the following demonstration, we use the graph convolutional network (GCN) as the base GNN model.}, where $d$ represent the hidden dimension of node embeddings. 
The GNN model learns how to aggregate information from the neighbourhood of each node to update its features. 
The resulting embedding is 
\begin{equation}\label{eq: GAE}
\begin{split}
    &H^{(0)} = X, \\
    &H^{(l+1)} = \text{ReLU}\left( A H^{(l)} B^{(l)} \right), \, l=0, \cdots, L-1, \\
    &Z = A H^{(L)} B^{(L)},
\end{split}
\end{equation}
where $H^{(l)}$ and $B^{(l)}$ are the node feature  and weight matrices at layer $l$.
To ease the notation, we let 
\begin{equation}
Z = f_{\theta}(X, A),
\label{eq:gae encoder}
\end{equation}
where the structure of the encoder, $f_{\theta}$, is defined in \eqref{eq: GAE} and $\theta = \{B^{(l)}\}_{l=0, \cdots, L}$ is a learnable parameter.
We reconstruct the binary adjacency matrix from the inner product between node embeddings, i.e. 
\begin{equation}\label{eq:gae decoder}
P({\hat{A}\,|\,Z}) = \prod_{i=1}^n\prod_{j=1}^n P(\hat{A}_{ij}\,|\,Z_i,Z_j)\, , \,\,\, \text{with} \; P(\hat{A}_{ij}=1\,|\,Z_i,Z_j) = \sigma(Z_i^\top Z_j) \, ,
\end{equation}
where $\hat{A}$ is the reconstructed binary adjacency matrix and $\sigma(\cdot)$ is the logistic sigmoid function. 
A more flexible version of the above is the directed GAE of \citep{kollias2022directed}.  
For highlighting the roles of nodes as either a source or a target in directed graphs,
a source and a target embeddings, $Z^S$ and $Z^T$, replace the single node embedding of \eqref{eq: GAE}. 
The encoder structure becomes 
\begin{equation}\label{eq: DiGAE}
\begin{split}
    &H_S^{(0)} = X, \quad H_T^{(0)} = X,\\
    &H_S^{(l+1)} = \text{ReLU}\left( {\Wtrain} H_T^{(l)} B_T^{(l)} \right), \\ 
    &H_T^{(l+1)} = \text{ReLU}\left( {\Wtrain}^\top H_S^{(l)} B_S^{(l)} \right), \, l=0, \cdots, L-1, \\
    &{Z^S} = {\Wtrain} H_T^{(L)} B_T^{(L)}, \quad {Z^T} = {\Wtrain}^\top H_S^{(L)} B_S^{(L)}, \\
\end{split}
\end{equation}
where $H_S^{(l)} $ and $H_T^{(l)}$ and $B_S^{(l)}$ and $ B_T^{(l)}$ are the source and target feature and weight matrices at layer $l$.
Compared to (\ref{eq: GAE}), we also replace the binary adjacency matrix with the weighted adjacency matrix $\Wtrain$ (\ref{eq: weighted adj train}) which effectively leverages the entire graph structure.

The predicted weighted adjacency matrix is
\begin{equation}
    \hat{W} = Z^S {Z^T}^\top.
\end{equation}
To optimize the GNNs parameters, we minimize 
\begin{equation} \label{eq: train DiGAE}
    \loss_{\textrm{GAE}} = \| \Atrain \odot \hat{W} -\Wtrain \|_F.    
\end{equation}
through gradient descent.
We train the model until convergence and then select the parameters that minimize $\loss_{\textrm{GAE}}$ on the validation set, $\Wval$.

\subsection{Line Graph Neural Network} \label{subsec: line graph} 
An alternative approach to predict edge weights is through an edge-centric line graph model. 
The idea is to convert the weight prediction task into a node regression problem. 
We define a line-graph GNN and train it with standard message-passing techniques. 
Given a weighted directed graph, $G$, the corresponding line graph, $L(G)$, is a graph such that each node of $L(G)$ represents an edge of G. Two nodes of $L(G)$ are adjacent if and only if their corresponding edges share a common endpoint in $G$. 
Equivalently, $L(G)$ is the intersection graph of the edges of $G$.
Each edge of $G$ becomes a node of $L(G)$, labelled by the set of its two endpoints. 
Let $L = L(G)$ and $X^L$ be the node feature matrix of $L$,
To obtain $X^L$, we combine the node features of the corresponding source node and target node in the original graph. 
We then define a GNN to process the nodes and the binary adjacency matrix of $L$.
The predicted node value are
\begin{equation}\label{eq: LGNN}
    Z^L = f_{\theta}(X^L, A^L),
\end{equation}
Similar to the GAE approach, we tune the GNN parameters by minimizing  
\begin{equation}
    \loss_{\textrm{LGNN}} = \sum_{(i, j) \in \Etrain} \left(Z^L_{(i, j)} - \Wtrain_{ij} \right) ^ 2.
\end{equation}
The load prediction task becomes a node regression problem, with node values used as labels. 
We split the (augmented) node set of $L$ into training, test, and calibration sets.
The GAE training weights correspond to the values of the training nodes of $L$.

\section{Related Work}
\subsection{Link Prediction}
Link prediction refers to the task of forecasting node connections in a graph. 
Its practical uses include predicting future friendships in social networks \citep{liben2003link}, detecting forthcoming collaborations in academic coauthor networks \citep{adamic2003friends}, identifying protein-protein interactions in biological networks \citep{lei2013novel}, and suggesting items in recommendation systems \citep{yilmaz2023link}. 
Traditional methods depended on heuristic node-similarity scores or latent node embedding. 
GNNs usually outperform these methods because they learn from the graph structure and node or edge features \citep{zhang2018link}.  

Current GNN-based link prediction methods \citep{liben2003link, zhang2018link, kipf2016variational, berg2017graph} ignore the edges between training and testing nodes \citep{chen2018network}. 
We address this shortcoming by assigning an arbitrary weight, $\delta$ in \eqref{eq: weighted adj train} to the calibration and test edges. 
This makes the binary adjacency matrix of the entire graph available to the model at training time (see Proposition \ref{prop:exchangeability}).

We do not employ Variational Graph Autoencoders (VGAEs) \citep{kipf2016variational} because they assume the node embedding is Gaussian distributed. 
As we obtain the edge weights from the inner product of two node embeddings, the assumption would restrict the distribution of the model outputs \cite{mallik2011distribution} and hamper the nonparametric advantages offered by Conformal Quantile Regression (CQR).

\subsection{Traffic Prediction}
Many existing studies on traffic forecasting primarily focus on developing deterministic prediction models \citep{bui2022spatial}.
Traffic applications, however, often require uncertainty estimates for future scenarios. 
\citet{zhou2020variational} incorporate the uncertainty in the node representations. 
In \citet{xu2023air}, a Bayesian ensemble of GNN models combines posterior distributions of density forecasts for large-scale prediction. 
\citep{maas2020uncertainty} combine Quantile Regression (QR) and Graph WaveNet 
to estimate the quantiles of the load distribution.

Traditionally, traffic forecasting is approached as a node-level regression problem \citep{cui2019traffic, jiang2022graph}, i.e. nodes and edges in a graph represent monitoring stations and their connections.
We adopt an edge-centric approach, i.e. we predict traffic flow over road segments through edge regression. 
Interestingly, the strategy aligns with several real-world setups, e.g. the Smart City Blueprint for Hong Kong 2.0, which emphasizes monitoring road segments (edges) rather than intersections (nodes) \citep{office2019smart}.

\subsection{Conformal Prediction (CP)}
CP provides prediction regions for variables of interest \citep{vovk2005algorithmic}. 
Replacing a model's point predictions with prediction regions is equivalent to estimating the model uncertainty.
Recent applications of CP range from pandemic-driven passenger booking systems \citep{werner2021evaluation} to smartwatch-based detection of coughing and sneezing events \citep{nguyen2018cover}, or model calibration in the scikit-learn library \citep{sweidan2021probabilistic}.

Standard CP uncertainty estimation requires training and testing to be exchangeable \citep{vovk2005algorithmic}.
Relaxing the exchangeability assumption would make CP applicable to various real-world scenarios, e.g. covariate-shifted \citep{tibshirani2019conformal} data, and time-series forecast \citep{gibbs2021adaptive, nettasinghe2023extending}, and graph-based applications \citep{zargarbashi23conformal, luo2023anomalous}. \citet{tibshirani2019conformal} extends the existing framework to handle situations where the training and test covariate distributions are different. 
\citet{barber2023conformal} addresses the more challenging distribution drift case. 
\citet{huang2023uncertainty} applies similar ideas to a graph-based model. 
In \citep{clarkson2023distribution}, the ERC method (Section \ref{subsec: ERC}) is adapted to produce Neighbourhood Adaptive Prediction Sets (NAPS). 
The method assigns higher weights to calibration nodes closer to the test node.
This restores the exchangeability of the conformity scores associated with the calibration set.

\section{Conformalized Graph Autoencoder}\label{sec: conformal}
In this section, we describe how to integrate CP uncertainty estimation \citep{vovk2005algorithmic} into a GAE model (Section \ref{subsec: GAE}).

\subsection{Conformal Prediction}\label{subsec: CP}
We assume we have access to the graph structure, $A$, the node features, $X$, and the weighted adjacency matrix $\Wtrain$ (\ref{eq: weighted adj train}).
Let $(a, b)$ be the endpoints of a test edge.
We aim to generate a prediction interval, $ C_{ab} = \left(f_{\theta}((a, b), A, X, \Wtrain) \right) \subset \mathbb{R}$, for the weight of the such a test edge.
The prediction interval should be marginally valid, i.e. it should obey 
\begin{equation}\label{eq: desired coverage}
P\left( W_{ab} \in C_{ab} \right) \geq 1 - \alpha, 
\end{equation}
where $\alpha \in (0, 1)$ is a user-defined error rate. 
The probability is over the data-generating distribution. 
For efficiency, we focus on the split CP approach \citep{papadopoulos2002inductive}, using the training edge set $\Etrain$ for training and the calibration edge set $\Ecalib$ for calibration.
$\Etrain$ is used to fit the prediction model, $f_{\theta}$, and a \score{} is calculated for each sample in $\Ecalib$.
The \score{} evaluates how well the predictions match the observed labels.
Lower scores usually indicate better predictions.
Given a user-specified error rate, $\alpha$, and the endpoint of a test edge, $( a, b)$, we compute the corresponding prediction interval, $C_{ab}$, using the $(1-\alpha)$-th sample quantile of the calibration conformity scores. 
If the calibration edges and $(a, b)$ are exchangeable, $C_{ab}$ has the required coverage (\ref{eq: desired coverage}).
This implies that the exchangeability requirement is only necessary between the calibration and test edges, aligning with the methodology of \citet{huang2023uncertainty}. In real-world traffic applications, we often encounter a fixed set of training edges, for instance, a designated area in a city with well-documented traffic flow data. Furthermore, a separate set might serve as both calibration and test sites, where traffic detectors are placed randomly. This arrangement ensures that the calibration and test edges are exchangeable.

Algorithm \ref{alg: split CP} shows how to use split CP with a GAE model for predicting edge weights. 
Proposition \ref{prop:exchangeability} shows that the load prediction intervals generated by applying split CP to the GAE model are marginally valid in the sense of \eqref{eq: desired coverage}.

\begin{algorithm}
\caption{Split Conformal Prediction for Graph Autoencoder}
\label{alg: split CP}
\hspace*{\algorithmicindent} \textbf{Input:} The binary adjacency matrix $A \in \{0, 1\}^{n\times n}$, node features $X\in \mathbb{R}^{n\times f}$, training edges and their weights $\Etrain$, $\Wtrain$, calibration edges and their weights $\Ecalib$, $\Wcalib$, and test edges $\Etest$, user-specified error rate $\alpha \in (0,1)$, GAE model $f_\theta$ with trainable parameter $\theta$.\\
\begin{algorithmic}[1]
\State Train the model $f_\theta$ with $\Wtrain$ according to (\ref{eq: train DiGAE}).
\State Compute the \score{} which measures how well the calibration edge weights $\Wcalib$ agree with the model $f_\theta$:
\begin{equation}\label{eq: score}
    V_{ij} = \left| f_\theta \left((i, j); A, X, \Wtrain \right) - \Wcalib_{ij} \right|, \; (i, j) \in \Ecalib.
\end{equation}
\State Compute $d =$ the $k$th smallest value in $\{V_{ij}\}$, where $k=\lceil(|\Ecalib| +1)(1-\alpha)\rceil$.
\State Construct a prediction interval for test edges:  
\begin{equation}
\begin{split}
   & C_{ab} \\
= & \Big[f_\theta\left( (a, b); A, X, \Wtrain \right) - d, f_\theta\left( (a, b); A, X, \Wtrain \right) + d \Big], \; (a, b) \in \Etest. \nonumber
\end{split}
\end{equation}
\end{algorithmic}
\hspace*{\algorithmicindent} \textbf{Output:} Prediction intervals for the test edges $(a, b) \in \Etest$ with the coverage guarantee:
\begin{equation}
    P\big(\Wtest_{ab} \in C_{ab} \big) \geq 1 - \alpha.
\end{equation}
\end{algorithm}
The GAE model in Algorithm \ref{alg: split CP} uses the graph structure, i.e. the binary adjacency matrix, $A$, and the training edge weights, $\Wtrain$, and the node features, $X$.
As the order of the nodes is arbitrary, the \score{} the calibration and test samples are exchangeable (see Assumption 1 of \citet{huang2023uncertainty}). 
Intuitively, varying the choice of the calibration and test sets will not statistically alter the \score{}.

\subsection{Conformal Quantile Regression}\label{subsec: CQR}
The GAE model of Algorithm\ref{alg: split CP} is a mean regression model, i.e. its output is the conditional expectation of the object label given the object features. 
In this case, the model learns a feature embedding for each node and generates the prediction given the pair of nodes connected by an edge $(i, j)$. 
Since GAE predicts the edge weights based on the embeddings of the two adjacent nodes, its outputs are conditionally independent given the node embeddings \citep{jia2020residual}. 
The associated prediction intervals are marginally valid by construction, i.e. the estimated model uncertainty is constant over the entire graph.
This may make the prediction bands inefficient if the data are heteroscedastic \citep{lei2018distribution}
A possible way out is CQR, which combines the advantages of CP and QR when handling heteroscedastic data \citep{romano2019conformalized}.

We improve GAE's computational efficiency by making the encoder (Section \ref{subsec: GAE}) produce a triple output, i.e. three embeddings for each node. 
The decoder then aligns these embeddings to the mean, the $\alpha/2$ quantile, and the $(1-\alpha/2)$ quantile of the predicted edge weights. 
This differs from having three single-output GAE encoders because most network parameters are shared across the three embeddings.
Let $\hat{W}$, $\hat{W}^{\alpha/2}$, and $\hat{W}^{1 - \alpha/2}$ be the mean, $\alpha/2$, and $(1-\alpha/2)$ quantiles of the edge weights, i.e.,
\begin{equation}\label{eq: CQR output}
    f_\theta\left( (i, j); A, X, \Wtrain \right) = \left[ \hat{W}_{ij}, \hat{W}^{\alpha/2}_{ij}, \hat{W}^{1 - \alpha/2}_{ij}  \right]
\end{equation}
We train the embedding by minimizing 
\begin{equation}\label{eq: train CQR-GAE}
    \loss_{\textrm{CQR-GAE}} = \loss_{\textrm{GAE}} + \sum_{(i, j) \in \Etrain} \rho_{\alpha/2}(\Wtrain_{ij}, \hat{W}^{\alpha/2}_{ij}) + \rho_{1 - \alpha/2}(\Wtrain_{ij}, \hat{W}^{1-\alpha/2}_{ij}), 
\end{equation}
where $\loss_{\textrm{GAE}}$ is the squared error loss defined in (\ref{eq: train DiGAE})
The second term is the pinball loss of \citet{steinwart2011estimating, romano2019conformalized}, defined as 
\begin{equation}
    \rho_{\alpha}(y, \hat{y}) \coloneqq 
    \begin{cases}
        \alpha (y - \hat{y}) & \textrm{if } y > \hat{y} \\
        (1 - \alpha) (y - \hat{y}) & \textrm{otherwise}
    \end{cases}
\end{equation}
The first term is added to train the mean estimator, $\hat W$.
Algorithm \ref{alg: CQR} describes how to obtain the prediction intervals in this setup. Contrary to the CP conformity score (\ref{eq: score}), the CQR conformity score (\ref{eq: CQR score}) considers both undercoverage and overcoverage scenarios.

\begin{algorithm}
\caption{Conformal Quantile Regression for Graph Autoencoder}
\label{alg: CQR}
\hspace*{\algorithmicindent} \textbf{Input:} The binary adjacency matrix $A \in \{0, 1\}^{n\times n}$, node features $X\in \mathbb{R}^{n\times f}$, training edges and their weights $\Etrain$, $\Wtrain$, calibration edges and their weights $\Ecalib$, $\Wcalib$, and test edges $\Etest$, user-specified error rate $\alpha \in (0,1)$, GAE model $f_\theta$ with trainable parameter $\theta$.\\
\begin{algorithmic}[1]
\State Train the model $f_\theta$ with $\Wtrain$ according to (\ref{eq: train CQR-GAE}).
\State Compute the \score{} which quantifies the residual of the calibration edge weights $\Wcalib$ projected onto the nearest quantile produced by $f_\theta$ (\ref{eq: CQR output}):
\begin{equation}\label{eq: CQR score}
    V_{ij} = \max\left \{ \hat{W}^{\alpha/2}_{ij} - \Wcalib_{ij}, \Wcalib_{ij} - \hat{W}^{1-\alpha/2}_{ij}  \right\}, \; (i, j) \in \Ecalib.
\end{equation}
\State Compute $d =$ the $k$th smallest value in $\{V_{ij}\}$, where $k=\lceil(|\Ecalib| +1)(1-\alpha)\rceil$;
\State Construct a prediction interval for test edges:  
\begin{equation}
    C_{ab} = \Big[\hat{W}^{\alpha/2}_{ab} - d, \hat{W}^{1-\alpha/2}_{ab} + d \Big], \; (a, b) \in \Etest. \nonumber
\end{equation}
\end{algorithmic}
\hspace*{\algorithmicindent} \textbf{Output:} Prediction intervals for the test edges $(a, b) \in \Etest$ with the coverage guarantee:
\begin{equation}
    P\big(\Wtest_{ab} \in C_{ab} \big) \geq 1 - \alpha.
\end{equation}
\end{algorithm}

\subsection{Error Reweighted Conformal Approach}\label{subsec: ERC}
When calibration and test samples are exchangeable, both CP (Section \ref{subsec: CP}) and CQR (Section \ref{subsec: CQR}) yield prediction intervals that meet the marginal coverage condition (\ref{eq: desired coverage}).
Local adaptability can be improved by adding an Error-Reweighting (ER) factor as in \citet{papadopoulos2011regression, lei2018distribution}.
The idea is to assign covariate-dependent weights to the errors, thereby mitigating the impact of heteroscedasticity on the accuracy and reliability of the predictions. 

In CP, we use MC dropout \citep{gal2016dropout} to assess the variability of model output. 
MC dropout is employed during evaluation and generates multiple predictions. 
We use the standard deviation of these predictions as a proxy of the residual. And the \score{} of CP in (\ref{eq: score}) with 
\begin{equation}
    V^{\textrm{ERC}}_{ij} =  \frac{\left| f_\theta \left((i, j); A, X, \Wtrain \right) - \Wcalib_{ij} \right|}{s_{ij}^{\textrm{MC}} + \epsilon}, \; (i, j) \in \Ecalib,
\end{equation}
where $s_{ij}^{\textrm{MC}} = \sqrt{\frac{1}{K-1} \sum_{k=1}^{K}(f^{k}_\theta \left( (i, j); A, X, \Wtrain \right) - \Bar{f}_\theta \left( (i, j); A, X, \Wtrain \right))^2}$ is the standard deviation of the model evaluations using MC dropout, $\epsilon > 0$ is a regularization hyperparameter to be determined by cross-validation \citep{lei2018distribution}. 
We set the number of model evaluations as $K=1000$ in numerical experiments. 

The empirical simulations of \citet{guan2023localized} show that combining the CQR and ER approaches produces efficient and locally adaptive intervals.
Besides a prediction model, this method requires training a residual model, which captures the local variations present in the data. 
In CQR, the residual model comes at no extra cost, as we obtain it from the distance from the $\alpha/2$-th and $(1-\alpha/2)$-th predicted quantiles. 
More concretely, we replace the \score{} of CQR in (\ref{eq: CQR score}) with \begin{equation}
    V^{\textrm{ERC}}_{ij} = \max\left \{ \frac{\hat{W}^{\alpha/2}_{ij} - \Wcalib_{ij}}{\big|\hat{W}^{1-\alpha/2}_{ij} - \hat{W}^{\alpha/2}_{ij}\big|}, \frac{\Wcalib_{ij} - \hat{W}^{1-\alpha/2}_{ij}}{\big|\hat{W}^{1-\alpha/2}_{ij} - \hat{W}^{\alpha/2}_{ij}\big|}  \right\}, \; (i, j) \in \Ecalib,
\end{equation}

Let $d^{\textrm{ERC}} =$ be the $k$th smallest value in $\{V^{\textrm{ERC}}_{ij}\}$, where $k=\lceil(n/2 +1)(1-\alpha)\rceil$.
The ER prediction intervals are 
\begin{equation}\label{eq: interval ERC CP}
\begin{split}
   C_{ab} = \Big[ & f_\theta \left((a, b); A, X, \Wtrain \right) - d^{\textrm{ERC}} \big( s_{ab}^{\textrm{MC}} + \epsilon \big),  \\
   & f_\theta \left((a, b); A, X, \Wtrain \right) + d^{\textrm{ERC}} \big( s_{ab}^{\textrm{MC}} + \epsilon \big) \Big], \; (a, b) \in \Etest,
\end{split}
\end{equation}
for CP-ERC, and
\begin{equation}\label{eq: interval ERC}
\begin{split}
   C_{ab} =  \Big[ &\hat{W}^{\alpha/2}_{ab} - d^{\textrm{ERC}} \big|\hat{W}^{1-\alpha/2}_{ab} - \hat{W}^{\alpha/2}_{ab} \big|,  \\
   & \hat{W}^{1-\alpha/2}_{ab} + d^{\textrm{ERC}}\big|\hat{W}^{1-\alpha/2}_{ab} - \hat{W}^{\alpha/2}_{ab}\big| \Big], \; (a, b) \in \Etest,
\end{split}
\end{equation}
for CQR-ERC.

\begin{proposition}\label{prop:exchangeability}
The prediction intervals generated by split CP (Algorithm \ref{alg: split CP}), CQR (Algorithm \ref{alg: CQR}), and ERC (Section \ref{subsec: ERC}), are marginally valid, i.e. obey (\ref{eq: desired coverage}).
\end{proposition}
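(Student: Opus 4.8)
The plan is to reduce all of the constructions --- split CP (Algorithm \ref{alg: split CP}), CQR (Algorithm \ref{alg: CQR}), and both CP-ERC and CQR-ERC (Section \ref{subsec: ERC}) --- to the single classical fact about split conformal prediction: if the conformity scores attached to the calibration edges together with the score of one test edge form an exchangeable collection, then the $\lceil(|\Ecalib|+1)(1-\alpha)\rceil$-th smallest calibration score dominates the test score with probability at least $1-\alpha$. Granting that fact, for each method it only remains to verify the algebraic identity ``test score $\le d$'' $\iff$ ``$\Wtest_{ab}\in C_{ab}$'', which is immediate from the definitions of the scores and the intervals. So the proof splits into (a) establishing exchangeability of the scores, and (b) the routine per-method translation.

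For step (a), fix a test edge $(a,b)\in\Etest$ and condition on the $\sigma$-algebra generated by the binary adjacency $A$, the node features $X$, the training and validation data $(\Etrain,\Wtrain,\Eval,\Wval)$, the trained parameters $\theta$, and the \emph{unordered} pool $\mathcal{P}:=\Ecalib\cup\{(a,b)\}$ together with the multiset of true weights $\{W_{ij}:(i,j)\in\mathcal{P}\}$. Two observations make the conditional law exchangeable. First, by the construction of $\Wtrain$ in \eqref{eq: weighted adj train}, the model is fitted using only $A$, $X$, and the \emph{training} weights; the calibration and test edges enter $\Wtrain$ only through the constant $\delta$, so no label $W_{ij}$ with $(i,j)\notin\Etrain$ is used to produce $f_\theta$ (this is exactly the first assumption of Section \ref{sec: problem}). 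Hence $f_\theta$ is measurable with respect to the conditioning $\sigma$-algebra, and so are the MC-dropout standard deviations $s_{ij}^{\textrm{MC}}$, which are computed from the fixed $f_\theta$ and fixed inputs with dropout noise drawn independently. Second, by the second assumption of Section \ref{sec: problem} the split of $\mathcal{P}$ into calibration edges and the test edge is uniformly at random --- equivalently $\Atrain,\Aval,\Atest$ are exchangeable and the node labels carry no information --- so conditionally the identity of the test edge is uniform over $\mathcal{P}$. Since each conformity score $V_{ij}$ (in any of the variants) is one and the same fixed function of the edge, its true weight $W_{ij}$, and the conditioning data, the induced collection of scores over $\mathcal{P}$ is exchangeable.

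For step (b) and the conclusion, apply the classical split-conformal quantile lemma (see \citet{papadopoulos2002inductive, romano2019conformalized}): with $k=\lceil(|\Ecalib|+1)(1-\alpha)\rceil$ and $d$ the $k$-th smallest calibration score, exchangeability of the $|\Ecalib|+1$ scores gives $P(V_{ab}\le d\mid\textrm{conditioning})\ge 1-\alpha$, and taking expectations yields the same bound unconditionally. It then remains to unwind the scores. For split CP, $V_{ab}\le d \iff |f_\theta-\Wtest_{ab}|\le d \iff \Wtest_{ab}\in[f_\theta-d,\,f_\theta+d]=C_{ab}$. For CQR, $\max\{\hat W^{\alpha/2}_{ab}-\Wtest_{ab},\,\Wtest_{ab}-\hat W^{1-\alpha/2}_{ab}\}\le d \iff \Wtest_{ab}\in[\hat W^{\alpha/2}_{ab}-d,\,\hat W^{1-\alpha/2}_{ab}+d]=C_{ab}$. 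For CP-ERC, dividing the split-CP identity through by $s_{ab}^{\textrm{MC}}+\epsilon>0$ gives $\Wtest_{ab}\in C_{ab}$ with $C_{ab}$ as in \eqref{eq: interval ERC CP}; for CQR-ERC the same manipulation with the normalizer $|\hat W^{1-\alpha/2}_{ab}-\hat W^{\alpha/2}_{ab}|$ gives the interval \eqref{eq: interval ERC}. In every case $P(\Wtest_{ab}\in C_{ab})\ge 1-\alpha$, which is \eqref{eq: desired coverage}.

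The main obstacle is step (a): one must be precise that conditioning on the right $\sigma$-algebra really does leave only the uniformly random calibration/test split, and in particular that the use of $\delta$ rather than $0$ on non-training edges, the validation-based model selection on $\Wval$, and the auxiliary MC-dropout evaluations introduce no covert dependence on the test labels and break no symmetry among the pool edges. Two minor caveats should also be recorded: the ERC formula for $k$ should read $\lceil(|\Ecalib|+1)(1-\alpha)\rceil$ (the ``$n/2$'' presumes a balanced split), and the CQR-ERC normalizer $|\hat W^{1-\alpha/2}_{ab}-\hat W^{\alpha/2}_{ab}|$ must be nonzero for the rescaling in step (b) to be reversible --- it suffices to assume this quantity is almost surely positive, or to add a regularizer as in the CP-ERC case.
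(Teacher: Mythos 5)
Your proposal is correct, and its first half — exchangeability of the calibration and test conformity scores because the model is fitted from $A$, $X$ and $\Wtrain$ only (non-training edges entering solely through the constant $\delta$), so that the uniformly random split of $\Ect$ into $\Ecalib$ and $\Etest$ leaves the scores symmetric — is the same argument the paper gives, though you formalize it more carefully via conditioning on the unordered calibration-plus-test pool. Where you diverge is the ERC part: the paper proves validity of CQR-ERC by viewing the reweighted score as a strictly monotone transformation $\Phi_{ij}(V)$ of the CQR score, invoking invertibility and the Inverse Function Theorem to pull the quantile bound back to the original score; you instead treat the normalized quantities $V^{\textrm{ERC}}_{ij}$ directly as conformity scores, apply the same split-conformal quantile lemma uniformly to all four methods, and recover the intervals by the elementary rescaling ``score $\le d \iff W_{ab}\in C_{ab}$''. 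The two routes are mathematically equivalent, but yours is more uniform, avoids the detour through $\Phi^{-1}$, and explicitly covers CP-ERC (with the MC-dropout normalizer), which the paper's written proof only handles implicitly; the paper's framing, in exchange, makes transparent that ERC is exactly a monotone reparametrization of an already-valid score. Your two caveats are well taken: the paper's $k=\lceil(n/2+1)(1-\alpha)\rceil$ in Section \ref{subsec: ERC} is indeed a slip for $k=\lceil(|\Ecalib|+1)(1-\alpha)\rceil$ (the proof itself uses the latter), and positivity of $\big|\hat{W}^{1-\alpha/2}_{ab}-\hat{W}^{\alpha/2}_{ab}\big|$ is tacitly assumed in the paper as well. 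One small looseness on your side: $s^{\textrm{MC}}_{ab}$ is not measurable with respect to your conditioning $\sigma$-algebra, since the dropout noise is fresh at evaluation time; the fix is simply to note that this noise is drawn i.i.d.\ and independently of which pool edge plays the role of the test edge, so the randomized scores remain exchangeable.
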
 

\begin{proof}
    First, we show that the calibration and test conformity scores defined in (\ref{eq: score}) are exchangeable. 
    Given the entire graph structure, $A$, all the node features, $X$, and the edge weights of the training edges, $\Wtrain$, 
    the node embeddings are trained based on $\Wtrain$, and the edge weights in the remaining $\Ect$ are set randomly, the division of $\Ect$ into $\Ecalib$ and $\Etest$ have no impact on the training process. Consequently, the \score{s} for $\Ecalib$ and $\Etest$ are exchangeable.
    In practice, we split $\Ect$ into $\Ecalib$ and $\Etest$ randomly (as detailed in Section \ref{sec: empirical})  by converting the graph into its line graph and then selecting nodes uniformly at random.

    We also explore an alternative proof which is equivalent to the proof in \citet{huang2023uncertainty} but applied within a line graph setting. 
    Consider the original graph $G = (V, E)$ and its corresponding line graph $G' = (V', E')$, where $V' = E$ and $E'$ denotes adjacency between edges in $G$. After randomly dividing $E$ into $\Etrain$ and $\Ect$, and further splitting $\Ect$ into $\Ecalib$ and $\Etest$, the edges of $G$ transforms into nodes in $G'$. This setup mirrors the node division in the line graph. We train node embeddings on $\Etrain$ using a graph autoencoder, which aligns with fixing the training node set in $G'$. Given this fixed training set, any permutation and division of $\Ect$ (which corresponds to nodes in $G'$) doesn't affect the training, and thus the \score{s} computed for $\Ecalib$ and $\Etest$ are exchangeable. 
    
    Given this exchangeability of \score{s}, the validity of the prediction interval produced by CP and CQR follows from Theorem 2.2 of \citet{lei2018distribution} and Theorem 1 of \citet{romano2019conformalized}.    
    Let $V$ be the \score{} of CQR (\ref{eq: CQR score}).
    The ERC approach performs a monotone transformation of $V$, defined as 
    \begin{equation}
        \Phi_{ij}(V) = \frac{V}{\big|\hat{W}^{1-\alpha/2}_{ij} - \hat{W}^{\alpha/2}_{ij}\big|}, 
    \end{equation}
    where $i$ and $j$ are two nodes in the graph\footnote{The nodes are represented by the  node features, $X_i, X_j$, and the embeddings,  $Z_i, Z_j$.}. 
    For all $(i, j)$ and all $V$,  $\Phi_{ij}^{\prime}(V)  = \frac{\partial\Phi_{ij}(V)}{\partial V }> 0 $, i.e. the transformation is strictly monotonic in $V$. 
    This implies $\Phi_{ab}$ is invertible for any test edge, $(a, b)$.
    Let $\Phi_{ab}^{-1}$ be the inverse of $\Phi_{ab}$. 
    The Inverse Function Theorem implies $\Phi_{ab}^{{-1}}$ is also strictly increasing.  
    Now suppose $d^{\textrm{ERC}}$ is the $k$th smallest value in $\{V^{\textrm{ERC}}_{ij}\} = \{\Phi_{ij}(V_{ij}) \}$, $k=\lceil(|\Ecalib| +1)(1-\alpha)\rceil$. Then for a test edge $(a, b)$, 
    \begin{equation}
        P(\Phi_{ab}(V_{ab}) \leq d^{\textrm{ERC}}) = \frac{\lceil(|\Ecalib| +1)(1-\alpha)\rceil}{|\Ecalib|+1}\geq 1 - \alpha.
    \end{equation} 
    Using the monotonicity of $\Phi^{-1}_{ab}$,
    \begin{equation}
    \begin{split}
        1 - \alpha & \leq P(\Phi_{ab}(V_{ab}) \leq V^{\textrm{ERC}}_k) \\
        &= P\left(V_{ab} \leq \Phi^{-1}_{ab}(d^{\textrm{ERC}}) \right) \\
        & = P\left(W_{ab} \in C_{ab} \right)
    \end{split}
    \end{equation}    
    The final equation is derived from the construction of the prediction interval (\ref{eq: interval ERC}) and the validity of CQR. 
    This shows that the prediction intervals based on the reweighted conformity scores are valid.
\end{proof}

\begin{figure}
\centerline{\includegraphics[width=\textwidth,clip=]{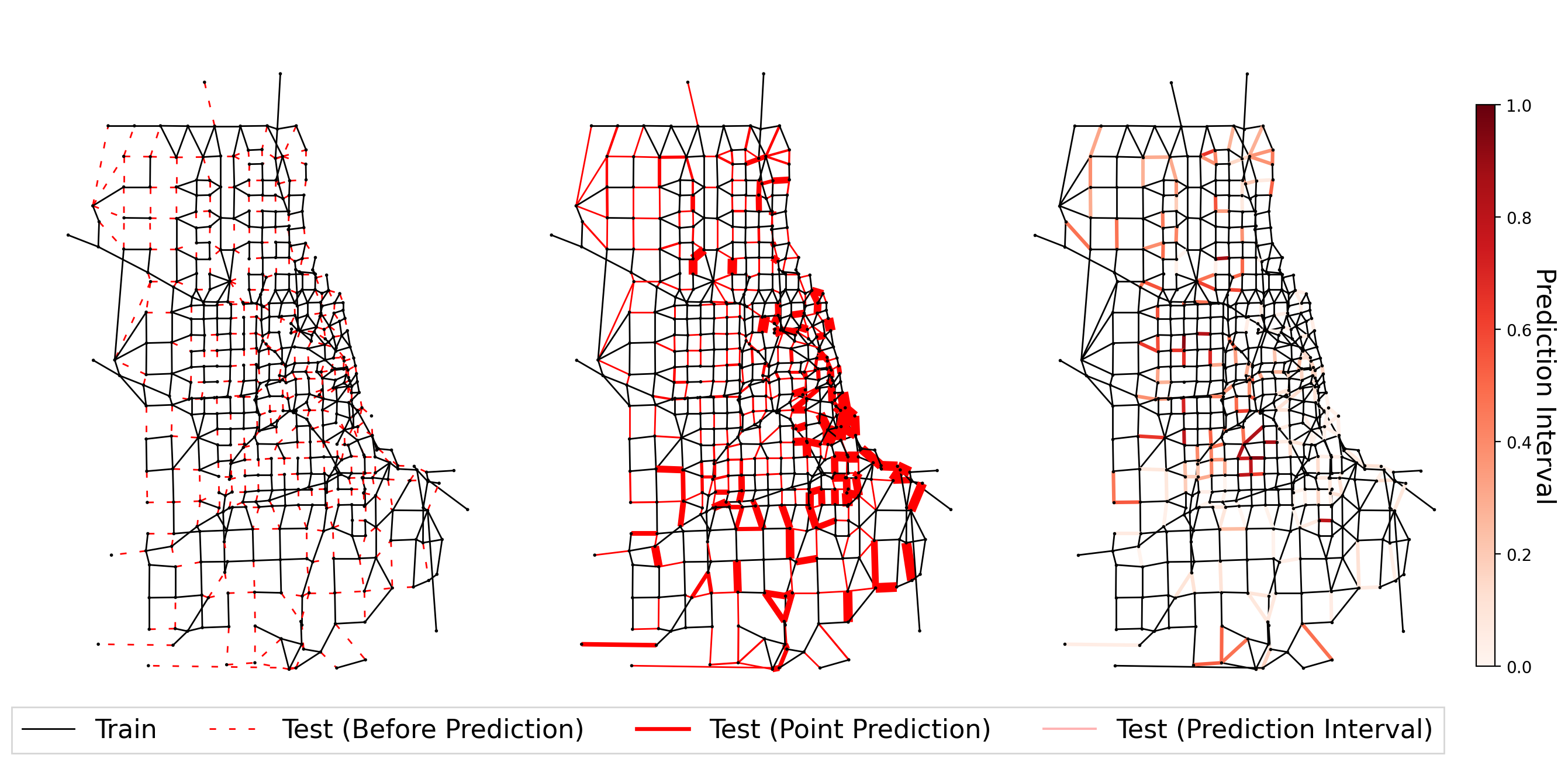}}
\small
\caption{The figure demonstrates the application of our proposed prediction models, which provide a coverage guarantee, using a snapshot of road network and traffic flow data from Chicago, IL, United States \citep{bar2021transportation}. The road network is divided into training roads (represented by black solid lines) and test roads (represented by red dashed lines). Our CQR-GAE model (Algorithm \ref{alg: CQR}) is developed to generate a prediction interval with a user-specified error rate of $\alpha=0.05$. The middle plot displays the predicted edge weights $\hat{W}$, where the line thickness increases proportionally with the predicted edge weights. The right plot illustrates the lengths of the prediction intervals, with darker lines indicating wider intervals or higher inefficiency (\ref{eq: ineff}).}
\label{fig: Chicago}
\end{figure}

\subsection{Comparison with Other Methods}

The NAPS method \citep{clarkson2023distribution} emphasizes inductive learning on graphs, which inherently assumes homophily since nodes closer to the target node are assigned more weight in constructing prediction sets for node prediction tasks. This stands in contrast to our approach, which focuses on transductive learning for edge prediction tasks on graphs. Additionally, the assumption of homophily may not hold in traffic networks \citep{xiao2023spatial}, as traffic conditions can vary; for example, a small road adjacent to a busy street might experience less traffic.
Diffusion Adaptive Prediction Sets (DAPS) \citep{zargarbashi23conformal} are applicable to the transductive learning setting but also requires homophily.

The primary innovation of our method lies in combining conformal prediction with a graph autoencoder framework to solve edge prediction problems. This is distinct from that of \citet{huang2023uncertainty}, who focuses on node prediction problems. 
Moreover, our experiments with the line graph demonstrate that the setup of an autoencoder framework and the transformation of an original graph into its line graph are not equivalent. This highlights the superiority of the autoencoder framework for addressing edge prediction problems, particularly through its use of the graph structure, which is specifically related to traffic-load applications where the graph structure remains constant but the edge weights vary. 

The Edge Exchangeable Model (EEM) method \citep{luo2023anomalous} applies an exchangeable distribution to the edges, positioning it within the inductive framework as it can handle unseen nodes during inference. However, it may fit poorly if the graph deviates from the edge exchangeability condition.

To the best of our knowledge, our research represents the first application of the ERC approach to graph-based prediction problems. In CQR-ERC, we highlight the benefits of incorporating localized variability into the construction of prediction intervals. The outputs $\hat{W}{ij}^{\alpha/2}$ and $\hat{W}{ij}^{1-\alpha/2}$ from the decoder, derived from node embeddings at various levels, naturally consider the neighborhood structure and attributes of adjacent nodes. Notably, CQR-ERC does not require additional training, contrasting with \citet{huang2023uncertainty}'s approach where the model undergoes fine-tuning using CP-aware objectives that require smooth approximations and specific training datasets. Consequently, CQR-ERC and their approach are orthogonal, allowing one to be implemented in conjunction with the other.

\section{Empirical Analysis}\label{sec: empirical}
In this section, we showcase the application of the proposed CP algorithms, Algorithm \ref{alg: split CP} and Algorithm \ref{alg: CQR}, to both GAE and LGNN models. We conduct a comparative analysis of the performance of these four models. The results demonstrate that CQR-GAE exhibits the highest level of efficiency among them. Additionally, the CQR-based models demonstrate enhanced adaptability to the data and have the capability to generate prediction intervals of varying lengths.

\vspace{0.1in}
\noindent
{\bf Dataset:}

We apply our proposed algorithm to a real-world traffic network, specifically the road network and traffic flow data from Chicago and Anaheim \citep{bar2021transportation}. The Chicago dataset consists of 541 nodes representing road junctions and 2150 edges representing road segments with directions, and the Anaheim dataset consists of 413 nodes and 858 edges. In this context, each node is characterized by a two-dimensional feature $X_i\in \mathbb{R}^{2}$ representing its coordinates, while each edge is associated with a weight that signifies the traffic volume passing through the corresponding road segment.

We adopt a similar procedure from \cite{jia2020residual, huang2023uncertainty}, and allocate 50\%, 10\%, and 40\% for the training set $\Etrain$, validation set $\Eval$, and the combined calibration and test set $\Ect$, respectively. Figure \ref{fig: Chicago} provides an example of how the Chicago network data is divided into training/val/test/calibration edges. Additionally, the prediction outcome of our proposed CQR-GAE (Algorithm \ref{alg: CQR}) is depicted. The median plot shows the predicted edge weights, while the right-hand plot shows the width of the prediction interval.

\begin{figure}
\centerline{\includegraphics[width=\textwidth,clip=]{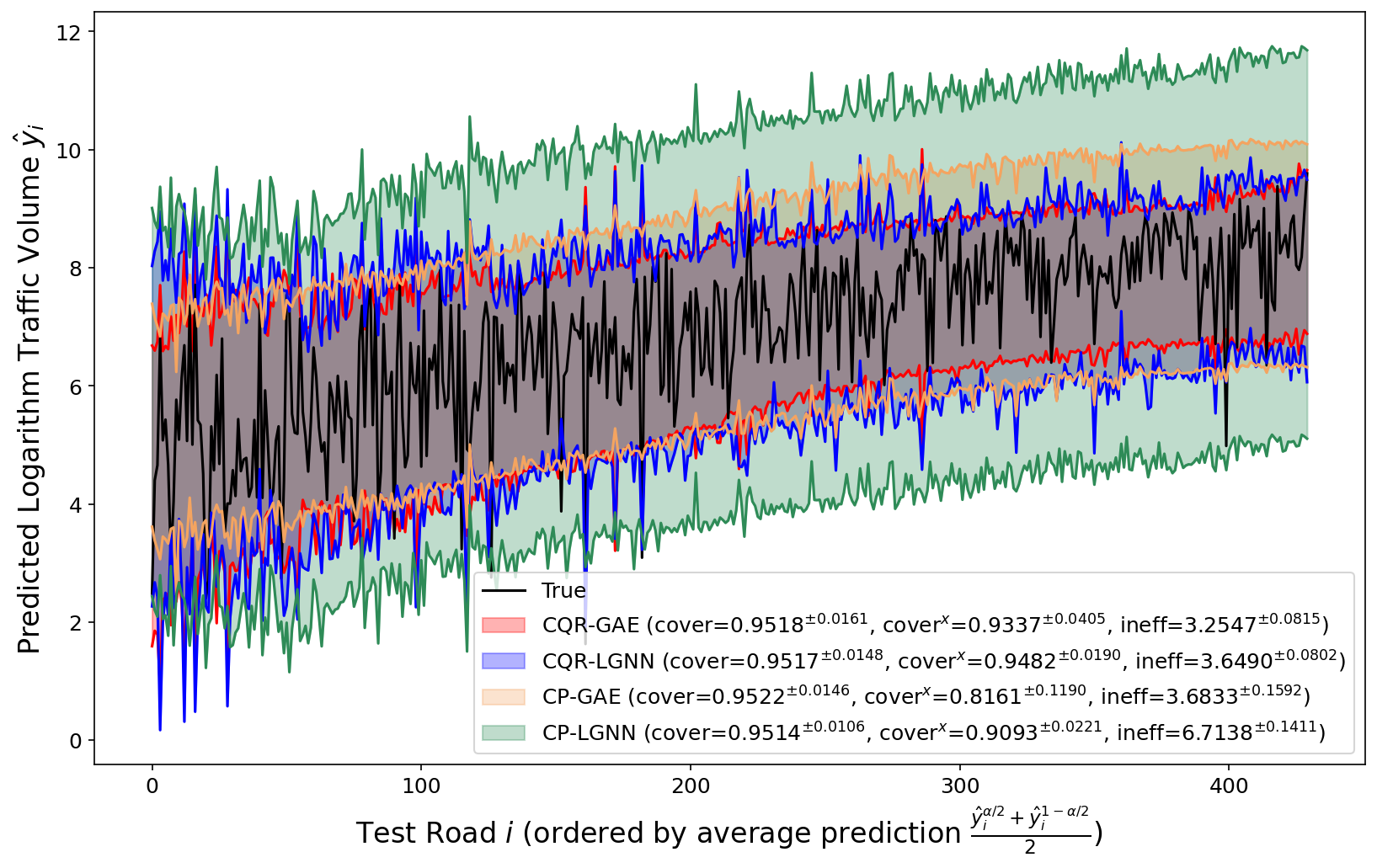}}
\small
\caption{The prediction interval generated by both CP (Algorithm \ref{alg: split CP}) and CQR (Algorithm \ref{alg: CQR}), which employ either GAE or LGNN as the GNN model, is constructed with a user-specified error rate of $\alpha=0.05$. All models successfully meet the coverage condition as their coverage (\ref{eq: cover}) surpasses $1-\alpha$. Notably, both CQR-GAE and CQR-LGNN outperform their CP counterparts in terms of inefficiency (\ref{eq: ineff}) and conditional coverage (\ref{eq: cond cover}). Among these, CQR-GAE achieves the best inefficiency.}
\label{fig: comparison}
\end{figure}

\vspace{0.1in}
\noindent
{\bf Evaluation Metrics:}

For evaluation, we use the marginal coverage, defined as 
\begin{equation}\label{eq: cover}
    \textrm{cover} = \frac{1}{|\Etest|} \sum_{(i,j)\in \Etest} \mathbbm{1}\big(\Wtest_{ij} \in C_{ij}\big),
\end{equation}
where $C_{ij}$ is prediction interval for edge $(i, j)$. Inefficiency is defined as
\begin{equation}\label{eq: ineff}
    \textrm{ineff} = \frac{1}{|\Etest|} \sum_{(i,j)\in \Etest} |C_{ij}|,
\end{equation}
which measures the average length of the prediction interval.

In addition to the marginal coverage, we also consider the conditional coverage. Specifically, we use the method of \citet{romano2020classification, cauchois2020knowing} to measure the coverage over a slab of the feature space $S_{v, a, b}= \big\{[X_i \mathbin\Vert X_j]\in \mathbb{R}^{2f}: a \leq v^\top x \leq b \big\}$, where $[X_i \mathbin\Vert X_j]$ denotes the node feature of two connected nodes of an edge $(i, j)$ and $v \in \mathbb{R}^{2f}$ and $a < b \in \mathbb{R}$ are chosen adversarially and independently from the data. For any prediction interval $f_\theta^*$ and $\delta \in (0, 1)$, the \textit{worst slice coverage} is defined as 
\begin{equation}\label{eq: cond cover}
\begin{split}
    \textrm{WSC}(f_\theta^*, \delta) &= \inf\limits_{\substack{v \in \mathbb{R}^{2f}, \\ a < b \in \mathbb{R}}} \big \{ P \big( \Wtest_{ij} \in C_{ij} \mid [X_i \mathbin\Vert X_j] \in S_{v, a, b}  \big) \\
    & \phantom{----------} \textrm{s.t. } P([X_i \mathbin\Vert X_j] \in S_{v, a, b}) \geq \delta  \big \}.
\end{split}
\end{equation}
We generate 1000 independent vectors $v$ on the unit sphere in $\mathbb{R}^{2f}$ and fine-tune the parameters $a, b, \delta$ using a grid search. Additionally, we utilize 25\% of the test data to estimate the optimal values for $v, a, b, \delta$, and calculate the conditional coverage with the leftover 75\%.

\vspace{0.1in}
\noindent
{\bf Models and baselines: }

We name the model that combines CP (Algorithm \ref{alg: split CP}) with GAE\footnote{We also experiment using GAE's directed variant, DiGAE. The corresponding models are CP-DiGAE, CQR-DiGAE, and CQR-ERC-DiGAE.} (Section \ref{subsec: GAE}) or LGNN (Section \ref{subsec: line graph}) as CP-GAE and CP-LGNN, respectively. Similarly, we name the models that use CQR (Algorithm \ref{alg: CQR}) as CQR-GAE and CQR-LGNN. We name the models that use ERC (Section \ref{subsec: ERC}) as CQR-ERC-GAE and CQR-ERC-LGNN.

To assess the coverage performance (\ref{eq: cover}), we employ quantile regression (QR) for GAE and LGNN as the baseline model. This model generates a prediction interval by optimizing the same loss function (\ref{eq: train CQR-GAE}), without calibration with CP. 

Considering that lower coverage results in higher efficiency, we limit our comparison to CP-based models and CQR-based models that achieve the coverage condition. We use four popular GNN models: GCN \citep{kipf2016semi}, GraphConv \citep{morris2019weisfeiler}, GAT \citep{velivckovic2017graph}, and GraphSAGE \citep{hamilton2017inductive}, as the base graph convolution layers for both CP and CQR based models.

\begin{table}
\label{tab:eff_all_models}
    \centering
    \begin{adjustbox}{width=\textwidth}
\begin{tabular}{|l|c|c|c|c|c|c|}
\toprule
GNN Model & \multicolumn{2}{c|}{GAE} & \multicolumn{2}{c|}{DiGAE} & \multicolumn{2}{c|}{LGNN}  \\ \cmidrule{1-7}
Anaheim  & cover & ineff & cover & ineff & cover & ineff\\\midrule
CP&$0.9545\std{0.0222}$
&$5.5550\std{0.6546}$
&$0.9546\std{0.0222}$
&$5.7775\std{0.5696}$
&$0.9526\std{0.0154}$
&$7.2667\std{0.3627}$\\
CP-ERC&$0.9543\std{0.0218}$
&$5.7091\std{0.7435}$
&$0.9546\std{0.0219}$
&$6.0347\std{0.6511}$
&$0.9529\std{0.0141}$
&$7.9131\std{0.6246}$\\
CQR &$0.9542\std{0.0221}$ 
&$5.2489\std{0.3378}$
&$0.9539\std{0.0219}$
&$5.2169\std{0.6917}$
&$0.9513\std{0.0232}$
&$5.4381\std{0.1453}$\\
CQR-ERC&${0.9548}\std{0.0222}$ 
&$5.2923\std{0.3648}$
&$0.9541\std{0.0220}$ 
&$\textbf{5.2082}\std{0.3681}$
&$0.9513\std{0.0231}$
&$5.4394\std{0.1466}$\\
QR&$\underline{0.9364}\std{0.0162}$
&${4.9680}\std{0.3067}$
&$\underline{0.9278}\std{0.0186}$ 
&${4.7075}\std{0.1529}$
&$\underline{0.9394}\std{0.0177}$
&${5.3239}\std{0.0751}$\\ \midrule
Chicago & cover & ineff & cover & ineff & cover & ineff\\\midrule
CP &$0.9512\std{0.0144}$
&$3.5787\std{0.1713}$
&$0.9515\std{0.0145}$
&$3.5727\std{0.2095}$
&$0.9500\std{0.0104}$
&$6.9399\std{0.1610}$ \\ 
CP-ERC &$0.9517\std{0.0142}$
&$3.5112\std{0.2216}$
&$0.9514\std{0.0147}$
&$3.5377\std{0.2247}$
&$0.9506\std{0.0094}$
&$7.4016\std{0.2926}$ \\ 
CQR  &$0.9515\std{0.0142} $ 
&${3.3987}\std{0.1088}$
&${0.9516}\std{0.0144}$
&$\mathbf{3.3578}\std{0.1382}$
&$0.9506\std{0.0165}$
&$3.4362\std{0.1029}$\\
CQR-ERC&$0.9507\std{0.0146}$ 
&$4.3706\std{0.4224}$
&$0.9504\std{0.0149}$ 
&$4.2859\std{0.5245}$
&$0.9507\std{0.0086}$
&$8.7069\std{0.7519}$\\
QR&$\underline{0.9497}\std{0.0110}$
&$3.3894\std{0.0871}$
&$0.9526\std{0.0088}$ 
&$3.3542\std{0.0992}$
&$0.9697\std{0.0083}$
&$3.6679\std{0.1094}$\\
\bottomrule
\end{tabular}    
    \end{adjustbox}
    \vspace{0.05in}
    \caption{Performance comparison of the proposed models in terms of coverage (\ref{eq: cover}) and inefficiency (\ref{eq: ineff}). GraphConv \citep{morris2019weisfeiler} is used as the graph convolutional layer for the GNN models. The best inefficiency for each dataset (excluding QR which does not meet the coverage condition) is highlighted in bold: CQR-ERC-DiGAE achieves the best inefficiency on the Anaheim dataset while CQR-DiGAE achieves the best inefficiency on the Chicago dataset. The baseline model, QR, does not meet the coverage condition and the results are underlined. CQR (including CQR-ERC) based models outperform CP based model in inefficiency. The performance is evaluated on 2 transportation network datasets.}
\end{table}

\vspace{0.1in}
\noindent
{\bf Result\footnote{The code is available at \url{https://github.com/luo-lorry/conformal-load-forecasting}.}:} 

For each dataset and model, we run the experiment 10 times and split the data into training, validation and the combined calibration and test sets. We conduct 100 random splits of calibration and testing edges to perform Algorithm \ref{alg: split CP} and Algorithm \ref{alg: CQR} and evaluate the empirical coverage.  

Table 1 indicates that QR does not meet the marginal coverage condition (\ref{eq: cover}). On the other hand, CP and CQR based models successfully meet the coverage condition, as indicated by their coverage (\ref{eq: cover}) surpassing $1-\alpha$. Table 1 also shows that GAE and DiGAE outperform LGNN, highlighting the efficacy of the autoencoder approach in weight prediction. 
Figure \ref{fig: comparison} illustrates the prediction interval produced by CP and CQR based models. These prediction intervals are constructed with a user-specified error rate of $\alpha=0.05$.

\begin{table}
\label{tab:eff_all_models1}
\centering
\begin{adjustbox}{width=\textwidth}
\begin{tabular}{|l|c|c|c|c|c|c|c|c|}
\toprule
GNN Layer & \multicolumn{2}{c|}{GraphConv} & \multicolumn{2}{c|}{SAGEConv}  & \multicolumn{2}{c|}{GCNConv} & \multicolumn{2}{c|}{GATConv} \\ \cmidrule{1-9}
Anaheim  & cover$^x$  & ineff & cover$^x$ & ineff & cover$^x$ & ineff& cover$^x$ & ineff\\\midrule
CP-GAE&$0.9293\std{0.0624} $ 
&$5.5550\std{0.6546}$
&$0.9287\std{0.0672}$
&$6.1299\std{0.6173}$
&$0.9366\std{0.0534}$ 
&$6.2583\std{0.6578}$
&$0.9341\std{0.0643}$
&$6.5190\std{0.6832}$\\
CP-DiGAE&$0.9326\std{0.0658}$ 
&$5.7775\std{0.5696}$
&$0.9248\std{0.0612}$
&$6.1795\std{0.6290}$
&$0.9237\std{0.0722}$ 
&$6.7250\std{1.0370}$
&$0.9144\std{0.0896}$
&$6.0298\std{0.5550}$\\
CP-LGNN&$0.9454\std{0.0305}$ 
&$7.2667\std{0.3627}$
&$0.9373\std{0.0373}$
&$6.6992\std{0.3739}$
&$0.9326\std{0.0405}$ 
&$6.6546\std{0.3979}$
&$0.9413\std{0.0344}$
&$6.8745\std{0.3756}$\\ \midrule
CP-ERC-GAE&$0.9218\std{0.0642}$ 
&$5.7091\std{0.7435}$
&$0.8944\std{0.0954}$
&$7.7478\std{1.1546}$
&$0.9267\std{0.0636}$ 
&$6.7686\std{0.9815}$
&$0.9233\std{0.0605}$
&$7.0980\std{0.6851}$\\
CP-ERC-DiGAE&$0.9205\std{0.0585}$ 
&$6.0347\std{0.6511}$
&$0.9140\std{0.0645}$
&$7.8277\std{1.0907}$
&$0.9227\std{0.0668}$ 
&$7.8166\std{2.1036}$
&$0.9195\std{0.0595}$
&$6.9144\std{0.8170}$\\
CP-ERC-LGNN&$0.9295\std{0.0343}$ 
&$7.9131\std{0.6246}$
&$0.9190\std{0.0423}$
&$8.3152\std{0.6159}$
&$0.9159\std{0.0351}$ 
&$9.5728\std{0.6349}$
&$0.9256\std{0.0372}$
&$7.8193\std{0.3708}$\\ \midrule
CQR-GAE&$\textbf{0.9548}\std{0.0219} $ 
&${5.2680}\std{0.3499}$
&$0.9535\std{0.0220}$
&$5.8272\std{0.2352}$
&$\textbf{0.9537}\std{0.0225}$
&$5.8324\std{0.1992}$
&$\textbf{0.9535}\std{0.0223}$
&$5.8893\std{0.2399}$\\
CQR-DiGAE&$0.8984\std{0.0926} $ 
&$\textbf{5.0580}\std{0.2792}$
&$0.8975\std{0.0982}$
&$\textbf{5.6483}\std{0.2399}$
&$0.9040\std{0.0873} $ 
&$5.7600\std{0.2960}$
&$0.9115\std{0.0691}$  
&$5.7889\std{0.2722}$\\
CQR-LGNN&$0.9010\std{0.0555}$ 
&${5.4381}\std{0.1453}$
&$0.9167\std{0.0480}$
&$5.9004\std{0.2302}$
&$0.9333\std{0.0430}$ 
&$6.1160\std{0.1818}$
&$0.9080\std{0.0607}$
&$6.0694\std{0.1861}$\\ \midrule  
CQR-ERC-GAE&$\textbf{0.9548}\std{0.0222} $ 
&${5.2923}\std{0.3648}$
&$\textbf{0.9537}\std{0.0221}$
&$5.8280\std{0.2340}$
&$\textbf{0.9538}\std{0.0225} $ 
&$5.8305\std{0.1966}$
&$\textbf{0.9535}\std{0.0223}$
&$5.8893\std{0.2421}$\\
CQR-ERC-DiGAE&$0.8953\std{0.0974}$ 
&${5.0600}\std{0.2837}$
&$0.8972\std{0.0983}$
&$5.6497\std{0.2479}$
&$0.9034\std{0.0867}$ 
&$\textbf{5.7595}\std{0.3153}$
&$0.9128\std{0.0690}$
&$\textbf{5.7840}\std{0.2725}$\\
CQR-ERC-LGNN&$0.9003\std{0.0556}$
&${5.4394}\std{0.1466}$
&$0.9171\std{0.0476}$ 
&$5.9012\std{0.2284}$
&$0.9326\std{0.0442}$
&$6.1131\std{0.1791}$
&$0.9326\std{0.0442}$
&$6.1131\std{0.1791}$\\ \midrule
Chicago  & cover$^x$  & ineff & cover$^x$ & ineff & cover$^x$ & ineff& cover$^x$ & ineff\\\midrule
CP-GAE&$0.8557\std{0.1165} $ 
&$3.5787\std{0.1713}$
&$0.8463\std{0.1302}$
&$3.6475\std{0.2073}$
&$0.8499\std{0.1350}$ 
&$3.6098\std{0.2233}$
&$0.8534\std{0.1185}$
&$3.8353\std{0.1860}$\\
CP-DiGAE&$0.8526\std{0.1276} $ 
&$3.5727\std{0.2095}$
&$0.8616\std{0.1202}$
&$3.5754\std{0.2280}$
&$0.8179\std{0.1310}$ 
&$3.5800\std{0.2136}$
&$0.8541\std{0.1109}$
&$3.7372\std{0.2165}$\\
CP-LGNN&$0.9049\std{0.0331}$ 
&$6.9399\std{0.1610}$
&$0.9049\std{0.0344}$
&$6.8360\std{0.1665}$
&$0.9067\std{0.0315}$ 
&$6.7206\std{0.1618}$
&$0.9052\std{0.0302}$
&$6.7616\std{0.1621}$\\ \midrule
CP-ERC-GAE&$0.9140\std{0.0606} $ 
&$3.5112\std{0.2216}$
&$0.8925\std{0.0778}$
&$3.7330\std{0.2980}$
&$0.8885\std{0.0948}$ 
&$5.0146\std{1.5911}$
&$0.8572\std{0.0855}$
&$4.5945\std{0.5359}$\\
CP-ERC-DiGAE&$0.9093\std{0.0682}$ 
&$3.5377\std{0.2247}$
&$0.8952\std{0.0724}$
&$3.6713\std{0.2693}$
&$0.8847\std{0.1028}$ 
&$4.8554\std{1.6399}$
&$0.8832\std{0.0857}$
&$4.0125\std{0.3946}$\\
CP-ERC-LGNN&$0.9293\std{0.0238}$ 
&$7.4016\std{0.2926}$
&$0.9273\std{0.0245}$
&$7.1055\std{0.2248}$
&$0.9245\std{0.0305}$ 
&$11.1934\std{1.7155}$
&$0.9291\std{0.0193}$
&$7.1543\std{0.2534}$\\ \midrule
CQR-GAE&$\textbf{0.9514}\std{0.0144} $ 
&${3.3652}\std{0.1312}$
&$\textbf{0.9517}\std{0.0141}$
&$3.5878\std{0.2107}$
&$\textbf{0.9516}\std{0.0144}$
&$3.5131\std{0.1353}$
&$0.9515\std{0.0143}$
&$\textbf{3.5428}\std{0.2027}$\\
CQR-DiGAE&$0.9205\std{0.0498} $ 
&$\textbf{3.3135}\std{0.1172}$
&$0.9223\std{0.0469}$
&$\textbf{3.3872}\std{0.1260}$
&$0.9250\std{0.0479} $ 
&$\textbf{3.4241}\std{0.1271}$
&$0.9089\std{0.0611}$
&$3.6158\std{0.2348}$\\
CQR-LGNN&$0.9284\std{0.0296}$ 
&${3.4362}\std{0.1029}$
&$0.9305\std{0.0258}$
&$3.4844\std{0.1233}$
&$0.9290\std{0.0284}$ 
&$3.6514\std{0.1050}$
&$0.9379\std{0.0261}$
&$4.0805\std{0.5445}$\\ \midrule  
CQR-ERC-GAE&$0.9512\std{0.0145} $ 
&${3.3648}\std{0.1358}$
&$\textbf{0.9517}\std{0.0142}$
&$3.5840\std{0.2150}$
&$0.9514\std{0.0146} $ 
&$3.5126\std{0.1367}$
&$\textbf{0.9516}\std{0.0142}$
&$3.5467\std{0.2019}$\\
CQR-ERC-DiGAE&$0.9199\std{0.0522} $ 
&${3.3145}\std{0.1184}$
&$0.9216\std{0.0486}$
&$\textbf{3.3873}\std{0.1272} $
&$0.9261\std{0.0486}$
&$3.4257\std{0.1251}$
&$0.9092\std{0.0620}$ 
&$3.6147\std{0.2328}$\\
CQR-ERC-LGNN&$0.9279\std{0.0304}$
&${3.4325}\std{0.1004}$
&$0.9285\std{0.0277}$ 
&$3.4658\std{0.1192}$
&$0.9273\std{0.0308}$
&$3.6185\std{0.1023}$
&$0.9376\std{0.0269}$
&$4.0203\std{0.5530}$\\ \bottomrule
\end{tabular}    
    \end{adjustbox}
    \vspace{0.05in}\caption{Performance comparison of the proposed models, based on the conditional coverage (\ref{eq: cond cover}) and inefficiency (\ref{eq: ineff}). The models were tested using several widely-used graph convolutional layers, including GraphConv \citep{morris2019weisfeiler}, SAGEConv \citep{hamilton2017inductive}, GCNConv \citep{kipf2016semi}, and GATConv \citep{velivckovic2017graph}. The best conditional coverage and inefficiency for each graph convolutional layer is highlighted in bold. Across diverse datasets and graph convolutional layers, CQR-GAE and CAR-ERC-GAE demonstrate strong performance in both inefficiency and conditional coverage, while CQR-DiGAE and CQR-ERC-DiGAE excels in minimizing inefficiency.}
\end{table}

Furthermore, Table 2 shows that CQR based models outperform their CP counterparts in terms of inefficiency (\ref{eq: ineff}) and conditional coverage (\ref{eq: cond cover}). This indicates that the CQR variants produce a better balance between capturing the uncertainty in the predictions and maintaining a high level of coverage. Figure \ref{fig: comparison} additionally illustrates the CQR models' adaptability to the data by generating prediction intervals of varying sizes.

Furthermore, the discrepancy in results between the GAE/DiGAE model and the line graph model suggests that the direct application of \citet{huang2023uncertainty} for traffic prediction through node value prediction in the transformed line graph might not be as effective as our GAE/DiGAE approach. Our method involves training multi-level node embeddings and extracting multiple quantiles of edge values through the decoding of these node embeddings.

Additionally, CQR-ERC and its CQR counterpart exhibit comparable performance in terms of both conditional coverage and inefficiency. CP-ERC demonstrates improved conditional coverage compared to its CP counterpart, albeit at the cost of increased inefficiency when analyzing the Chicago network dataset. Specifically, when utilizing the GraphConv graph convolutional layer, CP-ERC exhibits superior efficiency compared to its CP counterpart. However, the scenario differs when analyzing the Anaheim network dataset. In this case, CP-ERC performs worse in terms of both conditional coverage and inefficiency. For ERC, tuning the regularization hyperparameter can be a notably challenging task. The performance of the ERC is largely sensitive to the choice of this hyperparameter, and ERC is likely to produce large prediction intervals \citep{sesia2020comparison}.

We also conduct an ablation study to assess the impact of setting the edge weights for the validation, calibration and test edge sets. Initially, we set these edge weights to zero, creating a scenario comparable to the line graph setting. Subsequently, we assign them the average edge weight from the training edges. Additionally, we assign weights randomly by bootstrapping the training edge weights and allocating the sampled values to them.

\begin{table}[h!]
    \centering
    \begin{adjustbox}{width=\textwidth}
    \begin{tabular}{lcccccc}
        \toprule
           & \multicolumn{2}{c}{Zero}   & \multicolumn{2}{c}{Mean}   & \multicolumn{2}{c}{Random} \\ 
           \midrule
          & cover   & ineff   & cov   & ineff   & cover   & ineff \\
        \midrule
        DiGAE on Anaheim   & $0.9063 \std{ 0.0713 } $ & $5.1402 \std{ 0.3131 } $ & $0.9094 \std{ 0.1182 } $ & $4.9396 \std{ 0.3067 } $ & $0.9106 \std{ 0.0896 } $ & $5.0127 \std{ 0.3143}$ \\
        GAE on Anaheim   & $0.8908 \std{ 0.0627 } $ & $5.2968 \std{ 0.3054 } $ & $0.8909 \std{ 0.0980 } $ & $5.1462 \std{ 0.2833 } $ & $0.8941 \std{ 0.0601 } $ & $5.1863 \std{ 0.2914}$ \\
        \midrule
        DiGAE on Chicago   & $0.9219 \std{ 0.0474 } $ & $3.4093 \std{ 0.1410 } $ & $0.9253 \std{ 0.0425 } $ & $3.1729 \std{ 0.1278 } $ & $0.9273 \std{ 0.0455 } $ & $3.1938 \std{ 0.1035}$ \\
        GAE on Chicago   & $0.9001 \std{ 0.0513 } $ & $3.4034 \std{ 0.1034 } $ & $0.9017 \std{ 0.0543 } $ & $3.2502 \std{ 0.0987 } $ & $0.9048 \std{ 0.0393 } $ & $3.2912 \std{ 0.0870}$ \\
        \bottomrule
    \end{tabular}
    \end{adjustbox}
    \caption{Results of the ablation study showing different weight assignments to non-training edges in the GAE/DiGAE models. The findings across both datasets for GAE and DiGAE demonstrate that these models effectively utilize graph structure information by assigning nonzero weights to edges outside the training set. This capability contributes to their enhanced performance compared to the LGNN method.}
    \label{tab:data_table}
\end{table}

As a potential future direction, we plan to conduct an analysis of conditional coverage for network-based features \citep{huang2023uncertainty}, such as clustering coefficients, betweenness centrality, PageRank, and others. By examining the impact of these network features on the performance of CP-ERC and its CP counterpart, we aim to gain further insights into the effectiveness of CP-ERC in capturing the conditional coverage of the network.

\section{Conclusion}\label{sec: conclusion}
In this paper, we proposed a graph neural network approach for the prediction of edge weights with guaranteed coverage. We use conformal prediction to calibrate GNN outputs and establish a prediction interval. To effectively handle heteroscedastic node features, we utilize conformal quantile regression and error reweighted conformal approaches. %

We conduct a comprehensive empirical evaluation on real-world transportation datasets to assess the performance of our proposed method. The results clearly demonstrate the superiority of our approach over baseline techniques in terms of both coverage and efficiency. Future work could focus on enhancing the efficiency of our method or extending its applicability to other types of networks. Instead of edge representation by decoding of node embeddings, alternative approaches such as edge embedding methods could be explored.

\begin{thebibliography}{53}
\ifx\bisbn     \undefined \def\bisbn  #1{ISBN #1}\fi
\ifx\binits    \undefined \def\binits#1{#1}\fi
\ifx\bauthor   \undefined \def\bauthor#1{#1}\fi
\ifx\batitle   \undefined \def\batitle#1{#1}\fi
\ifx\bjtitle   \undefined \def\bjtitle#1{\textit{#1}}\fi
\ifx\bvolume   \undefined \def\bvolume#1{\textbf{#1}}\fi
\ifx\byear     \undefined \def\byear#1{#1}\fi
\ifx\bissue    \undefined \def\bissue#1{#1}\fi
\ifx\bfpage    \undefined \def\bfpage#1{#1}\fi
\ifx\blpage    \undefined \def\blpage #1{#1}\fi
\ifx\burl      \undefined \def\burl#1{#1}\fi
\ifx\href      \undefined \def\href#1#2{#2}\fi
\ifx\betal     \undefined \def\betal{et al.}\fi
\ifx\bctitle   \undefined \def\bctitle#1{#1}\fi
\ifx\beditor   \undefined \def\beditor#1{#1}\fi
\ifx\bbtitle   \undefined \def\bbtitle#1{\textit{#1}}\fi
\ifx\bedition  \undefined \def\bedition#1{#1}\fi
\ifx\bseriesno \undefined \def\bseriesno#1{\textbf{#1}}\fi
\ifx\blocation \undefined \def\blocation#1{#1}\fi
\ifx\bsertitle \undefined \def\bsertitle#1{\textit{#1}}\fi
\ifx\bsnm      \undefined \def\bsnm#1{#1}\fi
\ifx\bsuffix   \undefined \def\bsuffix#1{#1}\fi
\ifx\bparticle \undefined \def\bparticle#1{#1}\fi
\ifx\barticle  \undefined \def\barticle#1{}\fi
\ifx\binstitute  \undefined \def\binstitute#1{#1}\fi
\ifx\bpublisher  \undefined \def\bpublisher#1{#1}\fi
\ifx\doiurl    \undefined \def\doiurl#1{\href{#1}{DOI}}\fi
\makeatletter
\def\safeHref#1#2#3{\in@{http}{#2}\ifin@\href{#2}{#3}\else\href{#1#2}{#3}\fi}
\makeatother
\ifx\adsurl    \undefined \def\adsurl#1{\safeHref{https://ui.adsabs.harvard.edu/abs/}{#1}{ADS}}\fi
\ifx\arxivurl  \undefined \def\arxivurl#1{\safeHref{http://arxiv.org/abs/}{#1}{arXiv}}\fi
\ifx\botherref \undefined \def\botherref#1{}\fi
\ifx\url       \undefined \def\url#1{#1}\fi
\ifx\bchapter  \undefined \def\bchapter#1{}\fi
\ifx\bbook     \undefined \def\bbook#1{}\fi
\ifx\bcomment  \undefined \def\bcomment#1{#1}\fi
\ifx\oauthor   \undefined \def\oauthor#1{#1}\fi
\ifx\citeauthoryear \undefined\def \citeauthoryear#1{#1}\fi
\def\endbibitem {}
\ifx\bconflocation  \undefined \def\bconflocation#1{#1} \fi

\bibitem[\protect\citeauthoryear{Adamic and Adar}{2003}]{adamic2003friends}
\begin{barticle}
\bauthor{\bsnm{Adamic}, \binits{L.A.}},
\bauthor{\bsnm{Adar}, \binits{E.}}:
\byear{2003},
\batitle{Friends and neighbors on the web}.
\bjtitle{Social networks}
\bvolume{25},
\bfpage{211}.
\end{barticle}
\endbibitem

\bibitem[\protect\citeauthoryear{Ahn and Kim}{2021}]{ahn2021variational}
\begin{bchapter}
\bauthor{\bsnm{Ahn}, \binits{S.J.}},
\bauthor{\bsnm{Kim}, \binits{M.}}:
\byear{2021},
\bctitle{Variational graph normalized autoencoders}.
In: \bbtitle{Proceedings of the 30th ACM international conference on information \& knowledge management},
\bfpage{2827}.
\end{bchapter}
\endbibitem

\bibitem[\protect\citeauthoryear{Bar-Gera, Stabler, and Sall}{2023}]{bar2021transportation}
\begin{botherref}
\oauthor{\bsnm{Bar-Gera}, \binits{H.}},
\oauthor{\bsnm{Stabler}, \binits{B.}},
\oauthor{\bsnm{Sall}, \binits{E.}}:
2023,
Transportation networks for research core team.
\textit{Transportation Network Test Problems. Available online: \url{https://github. com/bstabler/TransportationNetworks} (accessed on 10 September 2023)}.
\end{botherref}
\endbibitem

\bibitem[\protect\citeauthoryear{Barber et~al.}{2023}]{barber2023conformal}
\begin{barticle}
\bauthor{\bsnm{Barber}, \binits{R.F.}},
\bauthor{\bsnm{Candes}, \binits{E.J.}},
\bauthor{\bsnm{Ramdas}, \binits{A.}},
\bauthor{\bsnm{Tibshirani}, \binits{R.J.}}:
\byear{2023},
\batitle{Conformal prediction beyond exchangeability}.
\bjtitle{The Annals of Statistics}
\bvolume{51},
\bfpage{816}.
\end{barticle}
\endbibitem

\bibitem[\protect\citeauthoryear{Berg, Kipf, and Welling}{2017}]{berg2017graph}
\begin{botherref}
\oauthor{\bsnm{Berg}, \binits{R.v.d.}},
\oauthor{\bsnm{Kipf}, \binits{T.N.}},
\oauthor{\bsnm{Welling}, \binits{M.}}:
2017,
Graph convolutional matrix completion.
\textit{arXiv preprint arXiv:1706.02263}.
\end{botherref}
\endbibitem

\bibitem[\protect\citeauthoryear{Bui, Cho, and Yi}{2022}]{bui2022spatial}
\begin{barticle}
\bauthor{\bsnm{Bui}, \binits{K.-H.N.}},
\bauthor{\bsnm{Cho}, \binits{J.}},
\bauthor{\bsnm{Yi}, \binits{H.}}:
\byear{2022},
\batitle{Spatial-temporal graph neural network for traffic forecasting: An overview and open research issues}.
\bjtitle{Applied Intelligence}
\bvolume{52},
\bfpage{2763}.
\end{barticle}
\endbibitem

\bibitem[\protect\citeauthoryear{Cauchois, Gupta, and Duchi}{2020}]{cauchois2020knowing}
\begin{botherref}
\oauthor{\bsnm{Cauchois}, \binits{M.}},
\oauthor{\bsnm{Gupta}, \binits{S.}},
\oauthor{\bsnm{Duchi}, \binits{J.}}:
2020,
Knowing what you know: valid and validated confidence sets in multiclass and multilabel prediction.
\textit{arXiv preprint arXiv:2004.10181}.
\end{botherref}
\endbibitem

\bibitem[\protect\citeauthoryear{Chen and Lei}{2018}]{chen2018network}
\begin{barticle}
\bauthor{\bsnm{Chen}, \binits{K.}},
\bauthor{\bsnm{Lei}, \binits{J.}}:
\byear{2018},
\batitle{Network cross-validation for determining the number of communities in network data}.
\bjtitle{Journal of the American Statistical Association}
\bvolume{113},
\bfpage{241}.
\end{barticle}
\endbibitem

\bibitem[\protect\citeauthoryear{Clarkson}{2023}]{clarkson2023distribution}
\begin{bchapter}
\bauthor{\bsnm{Clarkson}, \binits{J.}}:
\byear{2023},
\bctitle{Distribution free prediction sets for node classification}.
In: \bbtitle{International Conference on Machine Learning},
\bfpage{6268}.
\bcomment{PMLR}.
\end{bchapter}
\endbibitem

\bibitem[\protect\citeauthoryear{Cui et~al.}{2019}]{cui2019traffic}
\begin{barticle}
\bauthor{\bsnm{Cui}, \binits{Z.}},
\bauthor{\bsnm{Henrickson}, \binits{K.}},
\bauthor{\bsnm{Ke}, \binits{R.}},
\bauthor{\bsnm{Wang}, \binits{Y.}}:
\byear{2019},
\batitle{Traffic graph convolutional recurrent neural network: A deep learning framework for network-scale traffic learning and forecasting}.
\bjtitle{IEEE Transactions on Intelligent Transportation Systems}
\bvolume{21},
\bfpage{4883}.
\end{barticle}
\endbibitem

\bibitem[\protect\citeauthoryear{Gal and Ghahramani}{2016}]{gal2016dropout}
\begin{bchapter}
\bauthor{\bsnm{Gal}, \binits{Y.}},
\bauthor{\bsnm{Ghahramani}, \binits{Z.}}:
\byear{2016},
\bctitle{Dropout as a bayesian approximation: Representing model uncertainty in deep learning}.
In: \bbtitle{international conference on machine learning},
\bfpage{1050}.
\bcomment{PMLR}.
\end{bchapter}
\endbibitem

\bibitem[\protect\citeauthoryear{Gibbs and Candes}{2021}]{gibbs2021adaptive}
\begin{barticle}
\bauthor{\bsnm{Gibbs}, \binits{I.}},
\bauthor{\bsnm{Candes}, \binits{E.}}:
\byear{2021},
\batitle{Adaptive conformal inference under distribution shift}.
\bjtitle{Advances in Neural Information Processing Systems}
\bvolume{34},
\bfpage{1660}.
\end{barticle}
\endbibitem

\bibitem[\protect\citeauthoryear{Guan}{2023}]{guan2023localized}
\begin{barticle}
\bauthor{\bsnm{Guan}, \binits{L.}}:
\byear{2023},
\batitle{Localized conformal prediction: A generalized inference framework for conformal prediction}.
\bjtitle{Biometrika}
\bvolume{110},
\bfpage{33}.
\end{barticle}
\endbibitem

\bibitem[\protect\citeauthoryear{H.~Zargarbashi, Antonelli, and Bojchevski}{2023}]{zargarbashi23conformal}
\begin{bchapter}
\bauthor{\bsnm{H.~Zargarbashi}, \binits{S.}},
\bauthor{\bsnm{Antonelli}, \binits{S.}},
\bauthor{\bsnm{Bojchevski}, \binits{A.}}:
\byear{2023},
\bctitle{Conformal Prediction Sets for Graph Neural Networks}.
In: \beditor{\bsnm{Krause}, \binits{A.}},
\beditor{\bsnm{Brunskill}, \binits{E.}},
\beditor{\bsnm{Cho}, \binits{K.}},
\beditor{\bsnm{Engelhardt}, \binits{B.}},
\beditor{\bsnm{Sabato}, \binits{S.}},
\beditor{\bsnm{Scarlett}, \binits{J.}} (eds.)
\bbtitle{Proceedings of the 40th International Conference on Machine Learning},
\bsertitle{Proceedings of Machine Learning Research}
\bseriesno{202},
\bpublisher{PMLR},
\bfpage{12292}.
\burl{https://proceedings.mlr.press/v202/h-zargarbashi23a.html}.
\end{bchapter}
\endbibitem

\bibitem[\protect\citeauthoryear{Hamilton, Ying, and Leskovec}{2017}]{hamilton2017inductive}
\begin{botherref}
\oauthor{\bsnm{Hamilton}, \binits{W.}},
\oauthor{\bsnm{Ying}, \binits{Z.}},
\oauthor{\bsnm{Leskovec}, \binits{J.}}:
2017,
Inductive representation learning on large graphs.
\textit{Advances in neural information processing systems}
\textbf{30}.
\end{botherref}
\endbibitem

\bibitem[\protect\citeauthoryear{Hou and Holder}{2017}]{hou2017deep}
\begin{bchapter}
\bauthor{\bsnm{Hou}, \binits{Y.}},
\bauthor{\bsnm{Holder}, \binits{L.B.}}:
\byear{2017},
\bctitle{Deep learning approach to link weight prediction}.
In: \bbtitle{2017 International Joint Conference on Neural Networks (IJCNN)},
\bfpage{1855}.
\bcomment{IEEE}.
\end{bchapter}
\endbibitem

\bibitem[\protect\citeauthoryear{Huang et~al.}{2023}]{huang2023uncertainty}
\begin{botherref}
\oauthor{\bsnm{Huang}, \binits{K.}},
\oauthor{\bsnm{Jin}, \binits{Y.}},
\oauthor{\bsnm{Candes}, \binits{E.}},
\oauthor{\bsnm{Leskovec}, \binits{J.}}:
2023,
Uncertainty quantification over graph with conformalized graph neural networks.
\textit{NeurIPS}.
\end{botherref}
\endbibitem

\bibitem[\protect\citeauthoryear{Jia and Benson}{2020}]{jia2020residual}
\begin{bchapter}
\bauthor{\bsnm{Jia}, \binits{J.}},
\bauthor{\bsnm{Benson}, \binits{A.R.}}:
\byear{2020},
\bctitle{Residual correlation in graph neural network regression}.
In: \bbtitle{Proceedings of the 26th ACM SIGKDD international conference on knowledge discovery \& data mining},
\bfpage{588}.
\end{bchapter}
\endbibitem

\bibitem[\protect\citeauthoryear{Jiang and Luo}{2022}]{jiang2022graph}
\begin{barticle}
\bauthor{\bsnm{Jiang}, \binits{W.}},
\bauthor{\bsnm{Luo}, \binits{J.}}:
\byear{2022},
\batitle{Graph neural network for traffic forecasting: A survey}.
\bjtitle{Expert Systems with Applications}
\bvolume{207},
\bfpage{117921}.
\end{barticle}
\endbibitem

\bibitem[\protect\citeauthoryear{Kipf and Welling}{2016a}]{kipf2016semi}
\begin{botherref}
\oauthor{\bsnm{Kipf}, \binits{T.N.}},
\oauthor{\bsnm{Welling}, \binits{M.}}:
2016a,
Semi-supervised classification with graph convolutional networks.
\textit{arXiv preprint arXiv:1609.02907}.
\end{botherref}
\endbibitem

\bibitem[\protect\citeauthoryear{Kipf and Welling}{2016b}]{kipf2016variational}
\begin{botherref}
\oauthor{\bsnm{Kipf}, \binits{T.N.}},
\oauthor{\bsnm{Welling}, \binits{M.}}:
2016b,
Variational graph auto-encoders.
\textit{arXiv preprint arXiv:1611.07308}.
\end{botherref}
\endbibitem

\bibitem[\protect\citeauthoryear{Kollias et~al.}{2022}]{kollias2022directed}
\begin{bchapter}
\bauthor{\bsnm{Kollias}, \binits{G.}},
\bauthor{\bsnm{Kalantzis}, \binits{V.}},
\bauthor{\bsnm{Id{\'e}}, \binits{T.}},
\bauthor{\bsnm{Lozano}, \binits{A.}},
\bauthor{\bsnm{Abe}, \binits{N.}}:
\byear{2022},
\bctitle{Directed graph auto-encoders}.
In: \bbtitle{Proceedings of the AAAI Conference on Artificial Intelligence}
\bseriesno{36},
\bfpage{7211}.
\end{bchapter}
\endbibitem

\bibitem[\protect\citeauthoryear{Kumar et~al.}{2016}]{kumar2016edge}
\begin{bchapter}
\bauthor{\bsnm{Kumar}, \binits{S.}},
\bauthor{\bsnm{Spezzano}, \binits{F.}},
\bauthor{\bsnm{Subrahmanian}, \binits{V.}},
\bauthor{\bsnm{Faloutsos}, \binits{C.}}:
\byear{2016},
\bctitle{Edge weight prediction in weighted signed networks}.
In: \bbtitle{2016 IEEE 16th International Conference on Data Mining (ICDM)},
\bfpage{221}.
\bcomment{IEEE}.
\end{bchapter}
\endbibitem

\bibitem[\protect\citeauthoryear{Lei and Ruan}{2013}]{lei2013novel}
\begin{barticle}
\bauthor{\bsnm{Lei}, \binits{C.}},
\bauthor{\bsnm{Ruan}, \binits{J.}}:
\byear{2013},
\batitle{A novel link prediction algorithm for reconstructing protein--protein interaction networks by topological similarity}.
\bjtitle{Bioinformatics}
\bvolume{29},
\bfpage{355}.
\end{barticle}
\endbibitem

\bibitem[\protect\citeauthoryear{Lei et~al.}{2018}]{lei2018distribution}
\begin{barticle}
\bauthor{\bsnm{Lei}, \binits{J.}},
\bauthor{\bsnm{G’Sell}, \binits{M.}},
\bauthor{\bsnm{Rinaldo}, \binits{A.}},
\bauthor{\bsnm{Tibshirani}, \binits{R.J.}},
\bauthor{\bsnm{Wasserman}, \binits{L.}}:
\byear{2018},
\batitle{Distribution-free predictive inference for regression}.
\bjtitle{Journal of the American Statistical Association}
\bvolume{113},
\bfpage{1094}.
\end{barticle}
\endbibitem

\bibitem[\protect\citeauthoryear{Liben-Nowell and Kleinberg}{2003}]{liben2003link}
\begin{bchapter}
\bauthor{\bsnm{Liben-Nowell}, \binits{D.}},
\bauthor{\bsnm{Kleinberg}, \binits{J.}}:
\byear{2003},
\bctitle{The link prediction problem for social networks}.
In: \bbtitle{Proceedings of the twelfth international conference on Information and knowledge management},
\bfpage{556}.
\end{bchapter}
\endbibitem

\bibitem[\protect\citeauthoryear{Luo, Nettasinghe, and Krishnamurthy}{2023}]{luo2023anomalous}
\begin{bchapter}
\bauthor{\bsnm{Luo}, \binits{R.}},
\bauthor{\bsnm{Nettasinghe}, \binits{B.}},
\bauthor{\bsnm{Krishnamurthy}, \binits{V.}}:
\byear{2023},
\bctitle{Anomalous edge detection in edge exchangeable social network models}.
In: \bbtitle{Conformal and Probabilistic Prediction with Applications},
\bfpage{287}.
\bcomment{PMLR}.
\end{bchapter}
\endbibitem

\bibitem[\protect\citeauthoryear{Maas and Bloem}{2020}]{maas2020uncertainty}
\begin{botherref}
\oauthor{\bsnm{Maas}, \binits{T.}},
\oauthor{\bsnm{Bloem}, \binits{P.}}:
2020,
Uncertainty intervals for graph-based spatio-temporal traffic prediction.
\textit{arXiv preprint arXiv:2012.05207}.
\end{botherref}
\endbibitem

\bibitem[\protect\citeauthoryear{Mallik and Sagias}{2011}]{mallik2011distribution}
\begin{barticle}
\bauthor{\bsnm{Mallik}, \binits{R.K.}},
\bauthor{\bsnm{Sagias}, \binits{N.C.}}:
\byear{2011},
\batitle{Distribution of inner product of complex Gaussian random vectors and its applications}.
\bjtitle{IEEE transactions on communications}
\bvolume{59},
\bfpage{3353}.
\end{barticle}
\endbibitem

\bibitem[\protect\citeauthoryear{Morris et~al.}{2019}]{morris2019weisfeiler}
\begin{bchapter}
\bauthor{\bsnm{Morris}, \binits{C.}},
\bauthor{\bsnm{Ritzert}, \binits{M.}},
\bauthor{\bsnm{Fey}, \binits{M.}},
\bauthor{\bsnm{Hamilton}, \binits{W.L.}},
\bauthor{\bsnm{Lenssen}, \binits{J.E.}},
\bauthor{\bsnm{Rattan}, \binits{G.}},
\bauthor{\bsnm{Grohe}, \binits{M.}}:
\byear{2019},
\bctitle{Weisfeiler and leman go neural: Higher-order graph neural networks}.
In: \bbtitle{Proceedings of the AAAI conference on artificial intelligence}
\bseriesno{33},
\bfpage{4602}.
\end{bchapter}
\endbibitem

\bibitem[\protect\citeauthoryear{Mueller}{2023}]{mueller2023link}
\begin{barticle}
\bauthor{\bsnm{Mueller}, \binits{F.}}:
\byear{2023},
\batitle{Link and edge weight prediction in air transport networks—An RNN approach}.
\bjtitle{Physica A: Statistical Mechanics and its Applications}
\bvolume{613},
\bfpage{128490}.
\end{barticle}
\endbibitem

\bibitem[\protect\citeauthoryear{Nettasinghe et~al.}{2023}]{nettasinghe2023extending}
\begin{bchapter}
\bauthor{\bsnm{Nettasinghe}, \binits{B.}},
\bauthor{\bsnm{Chatterjee}, \binits{S.}},
\bauthor{\bsnm{Tipireddy}, \binits{R.}},
\bauthor{\bsnm{Halappanavar}, \binits{M.M.}}:
\byear{2023},
\bctitle{Extending Conformal Prediction to Hidden Markov Models with Exact Validity via de Finetti’s Theorem for Markov Chains}.
In: \bbtitle{International Conference on Machine Learning},
\bfpage{25890}.
\bcomment{PMLR}.
\end{bchapter}
\endbibitem

\bibitem[\protect\citeauthoryear{Nguyen and Luo}{2018}]{nguyen2018cover}
\begin{bchapter}
\bauthor{\bsnm{Nguyen}, \binits{K.A.}},
\bauthor{\bsnm{Luo}, \binits{Z.}}:
\byear{2018},
\bctitle{Cover your cough: Detection of respiratory events with confidence using a smartwatch}.
In: \bbtitle{Conformal and Probabilistic Prediction and Applications},
\bfpage{114}.
\bcomment{PMLR}.
\end{bchapter}
\endbibitem

\bibitem[\protect\citeauthoryear{of~the Government Chief Information~Officer}{2019}]{office2019smart}
\begin{barticle}
\bauthor{\bparticle{of~the} \bsnm{Government Chief Information~Officer}, \binits{O.}}:
\byear{2019},
\batitle{Smart city development in Hong Kong}.
\bjtitle{IET Smart Cities}
\bvolume{1},
\bfpage{23}.
\end{barticle}
\endbibitem

\bibitem[\protect\citeauthoryear{Papadopoulos, Vovk, and Gammerman}{2011}]{papadopoulos2011regression}
\begin{barticle}
\bauthor{\bsnm{Papadopoulos}, \binits{H.}},
\bauthor{\bsnm{Vovk}, \binits{V.}},
\bauthor{\bsnm{Gammerman}, \binits{A.}}:
\byear{2011},
\batitle{Regression conformal prediction with nearest neighbours}.
\bjtitle{Journal of Artificial Intelligence Research}
\bvolume{40},
\bfpage{815}.
\end{barticle}
\endbibitem

\bibitem[\protect\citeauthoryear{Papadopoulos et~al.}{2002}]{papadopoulos2002inductive}
\begin{bchapter}
\bauthor{\bsnm{Papadopoulos}, \binits{H.}},
\bauthor{\bsnm{Proedrou}, \binits{K.}},
\bauthor{\bsnm{Vovk}, \binits{V.}},
\bauthor{\bsnm{Gammerman}, \binits{A.}}:
\byear{2002},
\bctitle{Inductive confidence machines for regression}.
In: \bbtitle{Machine Learning: ECML 2002: 13th European Conference on Machine Learning Helsinki, Finland, August 19--23, 2002 Proceedings 13},
\bfpage{345}.
\bcomment{Springer}.
\end{bchapter}
\endbibitem

\bibitem[\protect\citeauthoryear{Romano, Patterson, and Candes}{2019}]{romano2019conformalized}
\begin{botherref}
\oauthor{\bsnm{Romano}, \binits{Y.}},
\oauthor{\bsnm{Patterson}, \binits{E.}},
\oauthor{\bsnm{Candes}, \binits{E.}}:
2019,
Conformalized quantile regression.
\textit{Advances in neural information processing systems}
\textbf{32}.
\end{botherref}
\endbibitem

\bibitem[\protect\citeauthoryear{Romano, Sesia, and Candes}{2020}]{romano2020classification}
\begin{barticle}
\bauthor{\bsnm{Romano}, \binits{Y.}},
\bauthor{\bsnm{Sesia}, \binits{M.}},
\bauthor{\bsnm{Candes}, \binits{E.}}:
\byear{2020},
\batitle{Classification with valid and adaptive coverage}.
\bjtitle{Advances in Neural Information Processing Systems}
\bvolume{33},
\bfpage{3581}.
\end{barticle}
\endbibitem

\bibitem[\protect\citeauthoryear{Samanta et~al.}{2020}]{samanta2020nevae}
\begin{barticle}
\bauthor{\bsnm{Samanta}, \binits{B.}},
\bauthor{\bsnm{De}, \binits{A.}},
\bauthor{\bsnm{Jana}, \binits{G.}},
\bauthor{\bsnm{G{\'o}mez}, \binits{V.}},
\bauthor{\bsnm{Chattaraj}, \binits{P.K.}},
\bauthor{\bsnm{Ganguly}, \binits{N.}},
\bauthor{\bsnm{Gomez-Rodriguez}, \binits{M.}}:
\byear{2020},
\batitle{Nevae: A deep generative model for molecular graphs}.
\bjtitle{The Journal of Machine Learning Research}
\bvolume{21},
\bfpage{4556}.
\end{barticle}
\endbibitem

\bibitem[\protect\citeauthoryear{Sesia and Cand{\`e}s}{2020}]{sesia2020comparison}
\begin{barticle}
\bauthor{\bsnm{Sesia}, \binits{M.}},
\bauthor{\bsnm{Cand{\`e}s}, \binits{E.J.}}:
\byear{2020},
\batitle{A comparison of some conformal quantile regression methods}.
\bjtitle{Stat}
\bvolume{9},
\bfpage{e261}.
\end{barticle}
\endbibitem

\bibitem[\protect\citeauthoryear{Steinwart and Christmann}{2011}]{steinwart2011estimating}
\begin{barticle}
\bauthor{\bsnm{Steinwart}, \binits{I.}},
\bauthor{\bsnm{Christmann}, \binits{A.}}:
\byear{2011},
\batitle{{Estimating conditional quantiles with the help of the pinball loss}}.
\bjtitle{Bernoulli}
\bvolume{17},
\bfpage{211 }.
\doiurl{https://doi.org/10.3150/10-BEJ267}.
\burl{https://doi.org/10.3150/10-BEJ267}.
\end{barticle}
\endbibitem

\bibitem[\protect\citeauthoryear{Sweidan and Johansson}{2021}]{sweidan2021probabilistic}
\begin{bchapter}
\bauthor{\bsnm{Sweidan}, \binits{D.}},
\bauthor{\bsnm{Johansson}, \binits{U.}}:
\byear{2021},
\bctitle{Probabilistic Prediction in scikit-learn}.
In: \bbtitle{The 18th International Conference on Modeling Decisions for Artificial Intelligence, On-line (from Ume{\aa}, Sweden), September 27-30, 2021.}.
\end{bchapter}
\endbibitem

\bibitem[\protect\citeauthoryear{Tibshirani et~al.}{2019}]{tibshirani2019conformal}
\begin{botherref}
\oauthor{\bsnm{Tibshirani}, \binits{R.J.}},
\oauthor{\bsnm{Foygel~Barber}, \binits{R.}},
\oauthor{\bsnm{Candes}, \binits{E.}},
\oauthor{\bsnm{Ramdas}, \binits{A.}}:
2019,
Conformal prediction under covariate shift.
\textit{Advances in neural information processing systems}
\textbf{32}.
\end{botherref}
\endbibitem

\bibitem[\protect\citeauthoryear{Veli{\v{c}}kovi{\'c} et~al.}{2017}]{velivckovic2017graph}
\begin{botherref}
\oauthor{\bsnm{Veli{\v{c}}kovi{\'c}}, \binits{P.}},
\oauthor{\bsnm{Cucurull}, \binits{G.}},
\oauthor{\bsnm{Casanova}, \binits{A.}},
\oauthor{\bsnm{Romero}, \binits{A.}},
\oauthor{\bsnm{Lio}, \binits{P.}},
\oauthor{\bsnm{Bengio}, \binits{Y.}}:
2017,
Graph attention networks.
\textit{arXiv preprint arXiv:1710.10903}.
\end{botherref}
\endbibitem

\bibitem[\protect\citeauthoryear{Vovk, Gammerman, and Shafer}{2005}]{vovk2005algorithmic}
\begin{bbook}
\bauthor{\bsnm{Vovk}, \binits{V.}},
\bauthor{\bsnm{Gammerman}, \binits{A.}},
\bauthor{\bsnm{Shafer}, \binits{G.}}:
\byear{2005},
\bbtitle{Algorithmic learning in a random world}
\bseriesno{29},
\bpublisher{Springer}.
\end{bbook}
\endbibitem

\bibitem[\protect\citeauthoryear{Werner et~al.}{2021}]{werner2021evaluation}
\begin{bchapter}
\bauthor{\bsnm{Werner}, \binits{H.}},
\bauthor{\bsnm{Carlsson}, \binits{L.}},
\bauthor{\bsnm{Ahlberg}, \binits{E.}},
\bauthor{\bsnm{Bostr{\"o}m}, \binits{H.}}:
\byear{2021},
\bctitle{Evaluation of updating strategies for conformal predictive systems in the presence of extreme events}.
In: \bbtitle{Conformal and Probabilistic Prediction and Applications},
\bfpage{229}.
\bcomment{PMLR}.
\end{bchapter}
\endbibitem

\bibitem[\protect\citeauthoryear{Whitney}{1992}]{whitney1992congruent}
\begin{botherref}
\oauthor{\bsnm{Whitney}, \binits{H.}}:
1992,
Congruent graphs and the connectivity of graphs.
\textit{Hassler Whitney Collected Papers},
61.
\end{botherref}
\endbibitem

\bibitem[\protect\citeauthoryear{Xiao et~al.}{2023}]{xiao2023spatial}
\begin{bchapter}
\bauthor{\bsnm{Xiao}, \binits{C.}},
\bauthor{\bsnm{Zhou}, \binits{J.}},
\bauthor{\bsnm{Huang}, \binits{J.}},
\bauthor{\bsnm{Xu}, \binits{T.}},
\bauthor{\bsnm{Xiong}, \binits{H.}}:
\byear{2023},
\bctitle{Spatial heterophily aware graph neural networks}.
In: \bbtitle{Proceedings of the 29th ACM SIGKDD Conference on Knowledge Discovery and Data Mining},
\bfpage{2752}.
\end{bchapter}
\endbibitem

\bibitem[\protect\citeauthoryear{Xu, Pang, and Liu}{2023}]{xu2023air}
\begin{barticle}
\bauthor{\bsnm{Xu}, \binits{Q.}},
\bauthor{\bsnm{Pang}, \binits{Y.}},
\bauthor{\bsnm{Liu}, \binits{Y.}}:
\byear{2023},
\batitle{Air traffic density prediction using Bayesian ensemble graph attention network (BEGAN)}.
\bjtitle{Transportation Research Part C: Emerging Technologies}
\bvolume{153},
\bfpage{104225}.
\end{barticle}
\endbibitem

\bibitem[\protect\citeauthoryear{Yilmaz, Balcisoy, and Bozkaya}{2023}]{yilmaz2023link}
\begin{barticle}
\bauthor{\bsnm{Yilmaz}, \binits{E.A.}},
\bauthor{\bsnm{Balcisoy}, \binits{S.}},
\bauthor{\bsnm{Bozkaya}, \binits{B.}}:
\byear{2023},
\batitle{A link prediction-based recommendation system using transactional data}.
\bjtitle{Scientific Reports}
\bvolume{13},
\bfpage{6905}.
\end{barticle}
\endbibitem

\bibitem[\protect\citeauthoryear{Zhang and Chen}{2018}]{zhang2018link}
\begin{botherref}
\oauthor{\bsnm{Zhang}, \binits{M.}},
\oauthor{\bsnm{Chen}, \binits{Y.}}:
2018,
Link prediction based on graph neural networks.
\textit{Advances in neural information processing systems}
\textbf{31}.
\end{botherref}
\endbibitem

\bibitem[\protect\citeauthoryear{Zhou et~al.}{2020}]{zhou2020variational}
\begin{barticle}
\bauthor{\bsnm{Zhou}, \binits{F.}},
\bauthor{\bsnm{Yang}, \binits{Q.}},
\bauthor{\bsnm{Zhong}, \binits{T.}},
\bauthor{\bsnm{Chen}, \binits{D.}},
\bauthor{\bsnm{Zhang}, \binits{N.}}:
\byear{2020},
\batitle{Variational graph neural networks for road traffic prediction in intelligent transportation systems}.
\bjtitle{IEEE Transactions on Industrial Informatics}
\bvolume{17},
\bfpage{2802}.
\end{barticle}
\endbibitem

\bibitem[\protect\citeauthoryear{Zulaika et~al.}{2022}]{zulaika2022lwp}
\begin{barticle}
\bauthor{\bsnm{Zulaika}, \binits{U.}},
\bauthor{\bsnm{Sanchez-Corcuera}, \binits{R.}},
\bauthor{\bsnm{Almeida}, \binits{A.}},
\bauthor{\bsnm{Lopez-de-Ipina}, \binits{D.}}:
\bye